\documentclass{article}

\usepackage{booktabs}
\usepackage{wrapfig}
\usepackage{amsmath,amssymb,amsthm,amsfonts}
\usepackage{graphicx}
\usepackage{hyperref}
\usepackage[dvipsnames]{xcolor}
\usepackage{algorithm}
\usepackage{algpseudocode}
\usepackage{float}
\usepackage{subcaption}
\usepackage{todonotes}
\usepackage{setspace}

\newtheorem{proposition}{Proposition}
\newtheorem{fact}{Fact}
\newtheorem{theorem}{Theorem}
\newtheorem{theorem*}{Theorem}

\newtheorem{lemma}{Lemma}

\newtheorem{definition}{Definition}
\newtheorem{remark}{Remark}
\usepackage{csquotes}

\newcommand{\D}{\mathrm{D}}
\newcommand{\bc}{\boldsymbol{\theta}_c}
\newcommand{\ec}{\boldsymbol{\eta}_c}

\usepackage[preprint]{corl_2025} 

\title{Bregman Centroid Guided Cross-Entropy Method}

%



\author{
    Yuliang Gu\textsuperscript{1}, 
    Hongpeng Cao\textsuperscript{2}, 
    Marco Caccamo\textsuperscript{2}, 
    Naira Hovakimyan\textsuperscript{1} \\
    \textsuperscript{1}Department of Mechanical Science and Engineering, UIUC, United States \\
    \textsuperscript{2}School of Engineering and Design, TUM, Germany
}

\begin{document}
\maketitle
\begin{abstract}
The Cross-Entropy Method (CEM) is a widely adopted trajectory optimizer in model-based reinforcement learning (MBRL), but its unimodal sampling strategy often leads to premature convergence in multimodal landscapes. In this work, we propose \textbf{$\mathcal B$regman-$\mathcal C$entroid Guided CEM ($\mathcal{BC}$-EvoCEM)}, a lightweight enhancement to ensemble CEM that leverages \emph{Bregman centroids} for principled information aggregation and diversity control. \textbf{$\mathcal{BC}$-EvoCEM} computes a performance-weighted Bregman centroid across CEM workers and updates the least contributing ones by sampling within a trust region around the centroid. Leveraging the duality between Bregman divergences and exponential family distributions, we show that \textbf{$\mathcal{BC}$-EvoCEM} integrates seamlessly into standard CEM pipelines with negligible overhead. Empirical results on synthetic benchmarks, a cluttered navigation task, and full MBRL pipelines demonstrate that \textbf{$\mathcal{BC}$-EvoCEM} enhances both convergence and solution quality, providing a simple yet effective upgrade for CEM.
\end{abstract}

\keywords{Cross-Entropy Method, Model-based RL, Stochastic Optimization} 

\section{Introduction}\label{sec:intro}

The \emph{Cross–Entropy Method} (CEM) is a derivative–free stochastic
optimizer that converts an optimization problem into a sequence of
rare event estimation tasks~\citep{rubinstein2004cross,de2005tutorial}. At each iteration, CEM samples $N$ candidates \(\{x_j\}_{j=1}^{N}\) from a parametric distribution $p_{\theta_t}$, selects top lowest-cost samples as an elite set \(\mathcal{E}_t\), and updates the parameters by maximizing the log-likelihood of these elites:
\begin{equation}\label{eq:CEM}
\theta_{t+1} \;=\;
   \arg\max_{\theta}\sum_{x\in\mathcal{E}_t}\log p_{\theta_t 
   }(x),    
\end{equation}
optionally smoothed via exponential averaging for stability. Its reliance solely on cost-based ranking instead of gradient information has made CEM a widely adopted solver for high-dimensional, nonconvex optimization tasks in robotics and control~\cite{pinneri2021sample,kobilarov2012cross,banks2020multi}.

\paragraph{CEM in MBRL} In model–based reinforcement learning (MBRL), an agent learns a
predictive model of the environment and plans through that model to reduce costly real‐world interactions~\cite{ha2018world, nagabandi2018neural, silver2017mastering}.  Stochastic
model predictive control (MPC) is a widely used planning strategy in this setting~\cite{williams2016aggressive, okada2020variational, chua2018deep, zhang2022simple, deisenroth2011pilco}.  At every decision step, MPC solves a finite–horizon trajectory
optimization problem, executes only the first action,
observes the next state, and replans. The CEM is often chosen as the optimizer within this loop due to its simplicity, reliance solely on cost function evaluations, and robustness to noisy or nonconvex objectives.

Despite these advantages, vanilla CEM suffers from its inherent
\emph{mode–seeking} nature: as the elites concentrate, it often collapses the search into a local optimum, which significantly limits the exploration in complex multimodal landscapes typical of RL tasks. Ensemble strategies have been proposed to mitigate this issue by running multiple CEM workers. \textbf{Centralized ensembles} merge the elite sets of all workers
and fit an explicit mixture model (e.g., commonly a Gaussian
mixture~\citep{okada2020variational}).  Although more expressive, they introduce additional hyperparameters (number of components, importance weights) and increase computational cost due to joint expectation maximization (EM) steps. \textbf{Decentralized ensembles} run multiple CEM instances in parallel, keep them independent, and output the best solution at termination~\cite{zhang2022simple}.  This approach is simple and scalable but tends to duplicate exploration effort and may reach premature consensus if poorly initialized.

\paragraph{Our Approach.} Motivated by the trade-off between diversity preservation and
computational efficiency, we introduce \textbf{$\mathcal B$regman-$\mathcal C$entroid Guided CEM ($\mathcal{BC}$-EvoCEM)}, a hybrid strategy that retains the
independent updates of decentralized ensembles yet introduces a simple information–geometric coupling across workers. At each CEM iteration, \textbf{$\mathcal{BC}$-EvoCEM} computes a \emph{performance–weighted
Bregman centroid}~\cite{nielsen2009sided} of all workers’ distributions. The centroid then defines
both a reference point and a \emph{Bregman ball} trust region.
Any worker whose distribution lies too close to the centroid or exhibits high cost is respawned by drawing new parameters from this trust region (see Fig.~\ref{fig:demo}).

\paragraph{Contributions.} 1) We formulate an information–geometric aggregation rule based on Bregman centroids that summarizes ensemble CEM workers with \emph{negligible} computation cost. 2) We provide a lightweight integration into the MPC loop for MBRL, preserving the benefits of \textbf{$\mathcal{BC}$-EvoCEM} with the simplicity of a standard warm-start heuristic. 3) Through experiments on multimodal synthetic functions, cluttered navigation tasks, and full MBRL benchmarks, we demonstrate faster convergence with improved performance relative to the vanilla and decentralized CEM.

\begin{figure}[t]
    \centering
    \begin{subfigure}[t]{0.47\textwidth}
        \centering
        \includegraphics[width=0.8\linewidth]{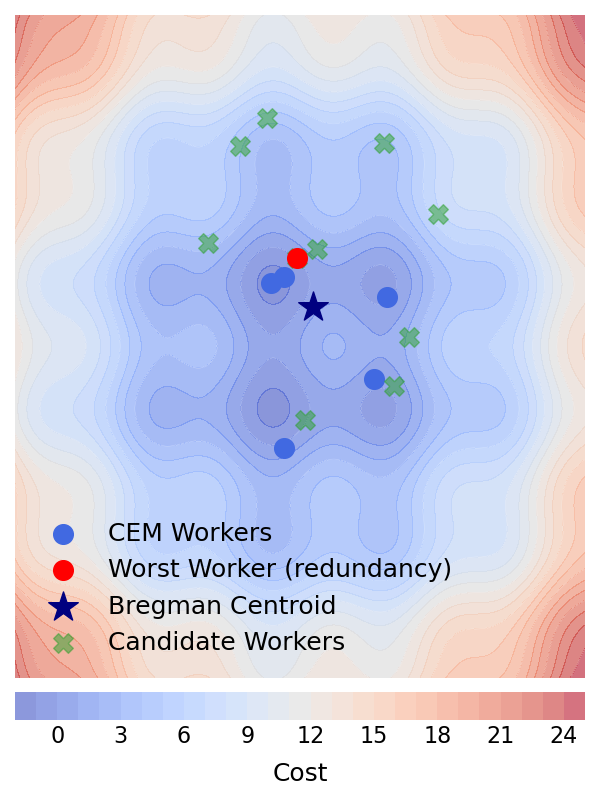} 
    \end{subfigure}
    \hfill
    \begin{subfigure}[t]{0.47\textwidth}
        \centering
        \includegraphics[width=0.8\linewidth]{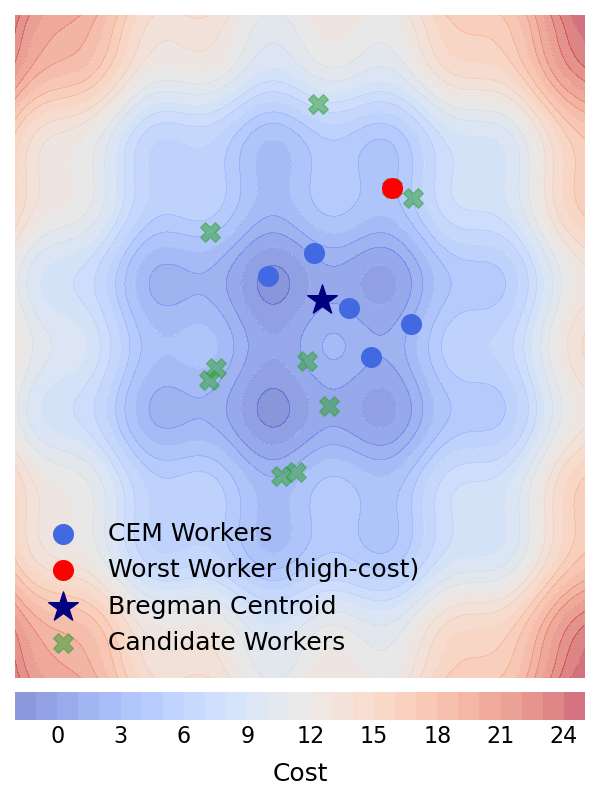} 
    \end{subfigure}
\caption{Illustration of $\mathcal{BC}$-EvoCEM (only \emph{means} are shown). \textcolor{Blue}{\(\bullet\) Active CEM workers}. \textcolor{red}{\(\bullet\) Worst workers} identified due to redundancy(left) and poor quality(right). \textcolor{BlueViolet}{\(\bigstar\) Bregman Centroid} as a geometric average of active workers. \textcolor{Green}{\(\boldsymbol{\times}\) Potential Candidates} sampled from the trust-region.}
\label{fig:demo}
\vspace{-5mm}
\end{figure}

\section{Bregman Divergence}
We review key definitions and properties of Bregman divergences~\cite{bregman1967relaxation} used throughout this paper. Let \(F:\mathcal{S}\to\mathbb{R}\) be a strictly convex, differentiable potential function on a convex set \(\mathcal{S}\). The \emph{Bregman divergence} between any two points \(x,y\in\mathcal{S}\) is defined as
\begin{equation}\label{eq:bregman-def}
\D_F(x\Vert y)
\;=\;
F(x)\;-\;F(y)\;-\;\bigl\langle x - y,\;\nabla F(y)\bigr\rangle,
\end{equation}
where \(\langle\cdot,\cdot\rangle\) denotes the inner product. Although \(\D_F\) is not a \emph{metric}, it retains ``distance‐like'' and statistical properties useful in optimization and machine learning applications~\cite{snell2017prototypical,ahn2021efficient}. In particular, its bijective correspondence with exponential families provides a range of clustering and mixture modeling techniques~\cite{banerjee2005clustering}.

\begin{wrapfigure}{r}{0.35\textwidth}
\vspace{-3mm}
\centering
\includegraphics[width=0.95\linewidth]{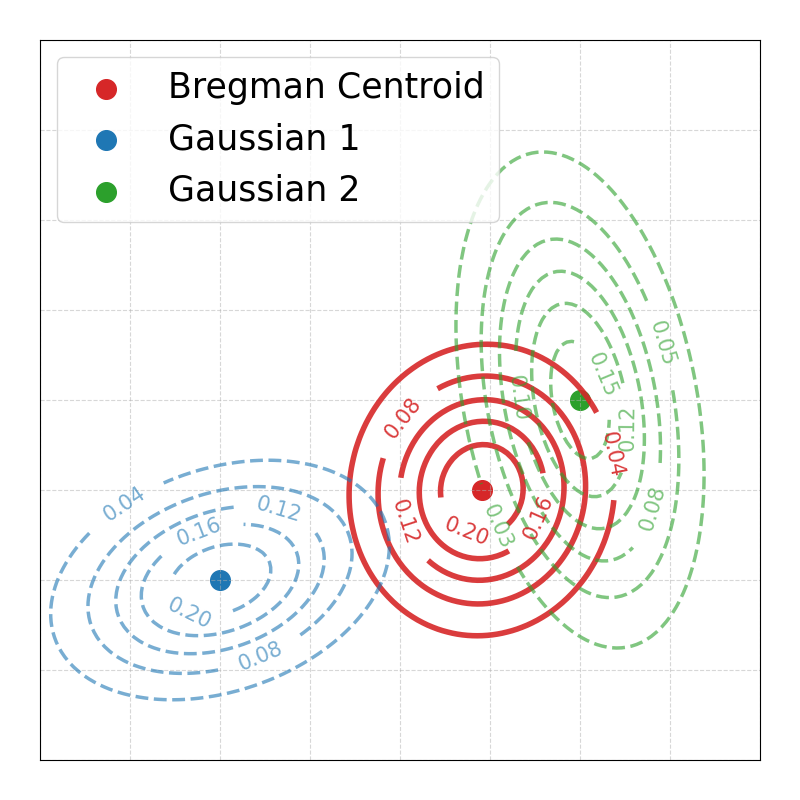} 
\caption{Illustration of the Bregman centroid of two Gaussians.}
\label{fig:bc_demo}
\vspace{-5mm}
\end{wrapfigure}

\paragraph{Bregman Centroid \& Information Radius.} Given a collection of points \(\{x_i\}_{i=1}^n\subset\mathcal{S}\), the \emph{Bregman centroid} (right‐sided) is the solution to the following minimization problem~\cite{nielsen2009sided}:
\[
\boldsymbol{x_c}
\;=\;
\arg\min_{x\in\mathcal{S}}
\frac{1}{n}\sum_{i=1}^n \D_F(x_i\Vert x).
\]
The corresponding minimized value is known as the \emph{Information Radius} (IR)~\cite{csiszar2004information} (\emph{Bregman Information} in ~\cite{banerjee2005clustering}), which characterizes the \emph{diversity} of the set \(\{x_i\}\) under the geometry induced by \(F\). Notably, for $F=\|x\|^2$, the IR coincides with the sample variance of the set. See~\cite{banerjee2005clustering,nielsen2009sided,csiszar2004information} for more details.

\section{Method}
    We propose a statistical characterization of a set of distributions through a \textbf{weighted Bregman centroid}~\cite{nielsen2009sided}. Let a set of CEM distributions \(\{\theta_1, \dots, \theta_n\}\)be  drawn from \emph{a parametric family} \(\{p_{\theta}\}_{\theta\in\Theta}\), each associated with an importance weight \(w_i\) that reflects its solution quality. Formally, the weighted Bregman centroid \(\bc\) of the set is defined as
\begin{equation}\label{eq:w-BC} 
    \bc
    \;=\; 
    \arg\min_{\theta \in \Theta} \;
    \sum_{i=1}^n 
    w_i \,\D_F\bigl(\theta_i \,\big\|\, \theta\bigr),
\end{equation} 
where \(\D_F\) is a Bregman divergence associated with a potential \(F\). During the CEM iterations, the weights are assigned based on the performance of \(p_{\theta_i}\) (e.g.,
\(
    w_i 
    \;\propto\; 
    \exp(-\,\mathbb{E}_{p_{\theta_i}}[J(x)])
\)).
Hence, the centroid serves as a \emph{performance-weighted ``geometric average''} of all CEM workers. To ensure that the ensemble remains effective in terms of both \textbf{performance and diversity}, we introduce two essential definitions:

\begin{definition}[Relevance Score]
Let the \emph{score} of a worker $\theta_i$ be the weighted Bregman divergence to the centroid: 
\(\boxed{
    \gamma_i \;=\; w_i \,\D_F\bigl(\theta_i \,\big\|\,\bc\bigr).}
\)
\end{definition}
\vspace{2mm}
\begin{definition}[Trust Region]
For \(\Delta >0\), the trust region is defined as a \textbf{Bregman ball} centering at $\bc$ with radius $\Delta$:
\(\boxed{
    \mathcal{B}_{\Delta}\bigl(\bc\bigr)
    \;=\;
    \Bigl\{
        \theta \in \Theta
        \;\Bigm|\;
        \D_F\bigl(\theta\,\big\|\, \bc\bigr)
        \;\le\;
        \Delta\}
    \Bigr\}.}
\)
\end{definition}

\paragraph{Interpretation of Relevance Scores.}
Intuitively, a low relevance score 
\(\gamma_i\) 
indicates that either the worker’s performance weight \(w_i\) is low or it's close to the centroid \(\bc\). Such workers contribute minimally to both exploitation and exploration and are therefore candidates for replacement. Moreover, one can verify that \(\gamma_i\) is exactly $\theta_i$'s contribution to the Information Radius (IR) of the set under the probabilistic vector \(\mathbf{w}\).

\paragraph{Role of the Trust Region.}
The trust region 
\(\mathcal{B}_{\Delta}(\bc)\) 
constrains where new workers may be introduced, ensuring that they remain in average proximity to the active workers. Crucially, defining the trust region as a \emph{Bregman ball} aligns with the intrinsic geometry of the chosen parametric family, which offers theoretical insights and computational advantages when employing exponential families, as further discussed in Section~\ref{sec:stoch_opt}.

\paragraph{Bregman Centroid Guided Evolution Strategy.}
Building on the above components, we propose a simple \emph{Bregman Centroid Guided Evolution Strategy} for ensemble CEM (see Alg.~\ref{alg:guided_evolution}). At each iteration, it begins with distributed CEM updates, continues with score evaluation, and finishes with an evolutionary update that replaces the lowest-scoring worker with a condidate sampled from the trust region.

\begin{algorithm}[H]
\small
\setstretch{1.3}
\caption{$\mathcal{BC}$-evoCEM: Guided Evolution Strategy For CEM}
\label{alg:guided_evolution}
\begin{algorithmic}[1]
\Require CEM distributions $\{\theta_i\}_{i=1}^n$, cost function $J(\,\cdot\,)$, iterations $T$
\For{$t=1$ \textbf{to} $T$}
    \State CEM update: $\{\theta_i\}\leftarrow\textsc{Distributed CEM}(\{\theta_i\},J)$
    \State Performance weights: $w_i\propto\exp(- \mathbb{E}_{p_{\theta_i}}[J\;])$
    \State Centroid: $\bc\leftarrow\arg\min_{\theta}\sum_i w_i \D_F(\theta_i\|\theta)$
    \State Scores: $\gamma_i\leftarrow w_i\, \D_F(\theta_i\|\bc)$
    \State Replace: $\theta_{\min}\leftarrow\arg\min_i \gamma_i$;\;
           $\theta_{\min}\leftarrow\textsc{Sample}\bigl(\mathcal{B}_{\Delta}(\bc)\bigr)$
\EndFor
\State \textbf{Return} centroid $\bc$ and optimized workers $\{\theta_i\}_{i=1}^n$ 
\end{algorithmic}
\end{algorithm}

\section{Stochastic Optimization in Exponential Families}
\label{sec:stoch_opt}
In this section, we leverage the relationship between regular exponential families and Bregman divergences to gain statistical insight into our guided evolution strategy (Alg.~\ref{alg:guided_evolution}) and achieve substantial computational savings in the CEM pipeline. 

Let \(\{p_{\theta}\}_{\theta\in\Theta}\) be a \emph{regular, minimal} exponential family in \emph{natural} form
\[
p_{\theta}(x)\;\propto\;\exp\bigl\{\theta^{\top}T(x)-\Psi(\theta)\bigr\}, 
\quad
\theta\in\Theta\subset\mathbb{R}^d,
\]
where \(T(x)\in\mathbb{R}^d\) represents the sufficient statistics and \(\Psi\) is the strictly convex cumulant. The corresponding Bregman divergence is \(\D_{\Psi}(\theta\Vert\theta')\)~\cite{banerjee2005clustering}. We denote the \emph{mean parameter} by \(
\eta=\nabla\Psi(\theta)=\mathbb{E}_{p_{\theta}}[T(X)].\)
Since the map \(\nabla\Psi:\Theta\to\mathcal{E}=\mathrm{int}\,\nabla\Psi(\Theta)\) is a bijection between the natural space \(\Theta\) and the mean space \(\mathcal{E}\) (see~\cite{barndorff2014information,amari1995information}), we advocate representing and manipulating distributions in the mean space.

As the core of our method, the Bregman centroid admits a simple form in mean coordinates:
\begin{proposition}[Centroid in mean coordinates]\label{prop:dual_centroid}
Given weights \(\mathbf w=(w_1,\dots,w_n)\), \(w_i\ge0\), \(\sum_i w_i=1\), and corresponding mean parameters \(\eta_i\), the weighted Bregman centroid satisfies
\[
\ec \;=\;\sum_{i=1}^n w_i\,\eta_i,
\quad
\bc \;=\;(\nabla\Psi)^{-1}(\ec).
\]
\end{proposition}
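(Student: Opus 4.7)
My plan is a direct first-order optimality argument that becomes transparent once we pass to mean coordinates. Let \(L(\theta)=\sum_{i=1}^{n} w_i\,\D_\Psi(\theta_i\Vert\theta)\). The cleanest route is to invoke the Fenchel--Young identity \(\Psi(\theta)+\Psi^{*}(\eta)=\langle\theta,\eta\rangle\), where \(\eta=\nabla\Psi(\theta)\) and \(\Psi^{*}\) is the Legendre conjugate of \(\Psi\), to rewrite each \(\D_\Psi(\theta_i\Vert\theta)\) as a sum of (a) terms constant in \(\theta\) (in particular the \(w_i\Psi(\theta_i)\)), (b) \(\Psi^{*}(\eta)\), and (c) a linear pairing between \(\eta\) and the data \(\theta_i\). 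Using \(\sum_i w_i=1\), the minimization in \(\theta\) collapses to a strictly convex problem in the dual variable \(\eta\) whose stationarity condition immediately reads \(\ec=\sum_i w_i\,\eta_i\). Equivalently, one can differentiate \(L\) in \(\theta\) directly and apply the chain rule \(d\eta/d\theta=\nabla^{2}\Psi(\theta)\); the resulting linear system (after factoring out the nonsingular Hessian) reduces to the same identity.

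For well-posedness I would lean on the standing assumption that \(\{p_\theta\}\) is a \emph{regular, minimal} exponential family. This gives \(\nabla^{2}\Psi(\theta)=\mathrm{Cov}_{p_\theta}[T(X)]\succ 0\), so \(L\) is strictly convex with a unique interior minimizer, and it guarantees that \(\nabla\Psi\colon\Theta\to\mathcal{E}\) is a diffeomorphism with inverse \(\nabla\Psi^{*}\), making the final step \(\bc=(\nabla\Psi)^{-1}(\ec)\) unambiguous. A small care point is to verify that \(\sum_i w_i\eta_i\in\mathcal{E}\), but this is immediate from convexity of \(\mathcal{E}\) together with \(w\) being a convex combination. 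I do not foresee a serious technical obstacle: the proposition is essentially a one-line consequence of Legendre duality, and its conceptual content is simply that in the dual \(\eta\)-coordinates a Bregman centroid flattens to ordinary Euclidean averaging, which is precisely what makes the mean parametrization the natural workspace for manipulating the ensemble.
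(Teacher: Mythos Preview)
Your overall plan---reduce the problem to a first-order optimality condition via Fenchel--Young duality---is exactly the ``mean-as-minimizer'' fact the paper invokes in its one-line proof, so in spirit you and the paper are doing the same thing. However, the concrete computation you sketch does not land where you say it does. Carrying out the Fenchel--Young substitution on $L(\theta)=\sum_i w_i\,\D_\Psi(\theta_i\Vert\theta)$ gives
\[
\D_\Psi(\theta_i\Vert\theta)=\Psi(\theta_i)+\Psi^{*}(\eta)-\langle\theta_i,\eta\rangle,\qquad \eta=\nabla\Psi(\theta),
\]
so the linear pairing that emerges is $\langle\theta_i,\eta\rangle$, coupling $\eta$ to the \emph{natural} parameters $\theta_i$, not to the $\eta_i$. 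Minimising $\Psi^{*}(\eta)-\bigl\langle\sum_i w_i\theta_i,\eta\bigr\rangle$ over $\eta$ yields $\nabla\Psi^{*}(\ec)=\sum_i w_i\theta_i$, i.e.\ $\bc=\sum_i w_i\theta_i$ in the primal coordinates---the standard right-centroid-equals-weighted-mean identity---\emph{not} $\ec=\sum_i w_i\eta_i$. Your chain-rule alternative reaches the same conclusion: $\nabla_\theta L(\theta)=\nabla^{2}\Psi(\theta)\bigl(\theta-\sum_i w_i\theta_i\bigr)$.

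The identity $\ec=\sum_i w_i\eta_i$ is the centroid formula for the \emph{other} sidedness: the left centroid $\arg\min_\theta\sum_i w_i\,\D_\Psi(\theta\Vert\theta_i)$, or equivalently (via the Legendre duality $\D_\Psi(\theta\Vert\theta_i)=\D_{\Psi^{*}}(\eta_i\Vert\eta)$) the right centroid of $\D_{\Psi^{*}}$ on the mean parameters. If you run your Fenchel--Young argument on \emph{that} objective, the pairing becomes $\langle\eta_i,\theta\rangle$ and stationarity reads $\nabla\Psi(\bc)=\sum_i w_i\eta_i$, which is precisely the claim. So your mechanism is correct, but as written you have the sidedness (equivalently, the choice of generator $\Psi$ versus $\Psi^{*}$) swapped, and the argument as it stands proves a different---though equally classical---identity than the one in the proposition.
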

\vspace{-5mm}
\begin{proof}
By the \emph{mean‐as‐minimizer} property of right‐sided Bregman divergences~\cite{banerjee2005clustering,nielsen2009sided}, the optimality condition \(\nabla\Psi(\bc)=\ec\) uniquely determines \(\bc\) via the bijection \(\nabla\Psi\colon\Theta\to\mathcal{E}\).
\end{proof}
Given that CEM’s likelihood evaluations (see Eq.~\eqref{eq:CEM}) already yield the empirical mean
\(
\widehat\eta_i = \frac{1}{N}\sum_{j=1}^N T_i(x),
\)
the centroid
\(\ec\) is obtained \textbf{\emph{for free}}. No extra optimization (e.g., solving~\eqref{eq:w-BC}) is required.  
\subsection{Scoring as Likelihood-based Ranking}
\label{sec:score_ranking}
Since only \emph{relative} scores matter for ranking workers in Alg.~\ref{alg:guided_evolution}, we may drop all terms independent of \(i\) and rewrite the \emph{relevance score} as (see Appendix~\ref{appendix:scores})
\[
   \gamma_i
   =\;
   w_i\, \D_{\Psi}\!\bigl(\theta_i\parallel\bc\bigr)
   \;\;\propto\;\;
   w_i\Bigl[\Psi(\theta_i)-\langle\theta_i, \ec\rangle\Bigr] \;=\;
-\,w_i\,\ell(\theta_i;\,\ec),
\]
where \(
\ell(\theta;x)
=\langle\theta,x\rangle-\Psi(\theta)
\) is precisely the \emph{per-sample} log-likelihood that the natural parameter $\theta_i$ would attain under some (hypothetical) dataset whose empirical average is $\ec$. Intuitively, ranking workers by $\ell(\theta_i;\,\ec)$ is equivalent to asking:
\begin{displayquote}
\emph{How well the worker $\theta_i$ explain the aggregated information collected from all workers $\{\theta_i\}_{i=1}^n$?}
\end{displayquote}
This yields a cheap moment-matching ranking metric that requires only inner products and evaluations of \(\Psi\).

\subsection{Efficient Trust‑Region Sampling}
\label{sec:ball_sampling}
Given the centroid $\ec$, we sample candidates from the trust region
\(
   \mathcal{B}_{\Delta}(\bc)
\)
by working with its dual characterization
\(
   \mathcal{S}
   := \{\eta\in\mathcal E : \D_{\Psi^{\!*}}(\ec\parallel\eta)\le\Delta\},
\) where \(\Psi^{\!*}\) is the convex conjugate of \(\Psi\).
\begin{definition}[Radial Bregman Divergence]\label{def:rDiv}
For \(v\in\mathbb S^{d-1}\)(the unit sphere) define the radial Bregman divergence with respect to a fixed $\eta \in \mathcal E$
\[
     g_v(\rho)
     := \D_{\Psi^{\!*}}\!\bigl(\eta\parallel\eta+\rho v\bigr),
     \qquad \rho\ge0.
\]
\end{definition}

\begin{theorem}\label{thm:sampling_valid}
The \textsc{Turst-Region Sampler} (see Alg.~\ref{alg:exact_sampling}) produces
\(\eta_{\mathrm{new}}\sim\mathrm{Unif}(\mathcal{S})\)
and \(\theta_{\mathrm{new}}\in\mathcal{B}_{\Delta}(\bc)\).
If \(\Psi\) is quadratic (e.g., fixed-$\Sigma$ Gaussian),
\(\theta_{\mathrm{new}}\) is uniformly distributed in
\(\mathcal{B}_{\Delta}(\bc)\).
\end{theorem}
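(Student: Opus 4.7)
The plan is to split the theorem into two independent pieces and handle each by leveraging Legendre duality plus a polar-coordinate change of variables in the mean space. First I would invoke the standard identity $\D_{\Psi}(\theta\Vert\bc)=\D_{\Psi^{*}}(\ec\Vert\eta)$ for $\eta=\nabla\Psi(\theta)$, together with the bijection $\nabla\Psi^{*}\colon\mathcal{E}\to\Theta$, to conclude $\nabla\Psi^{*}(\mathcal{S})=\mathcal{B}_\Delta(\bc)$. Hence setting $\theta_{\mathrm{new}}=\nabla\Psi^{*}(\eta_{\mathrm{new}})$ guarantees $\theta_{\mathrm{new}}\in\mathcal{B}_\Delta(\bc)$ the moment $\eta_{\mathrm{new}}\in\mathcal{S}$, so all remaining work concerns the distribution of $\eta_{\mathrm{new}}$.

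For the uniformity claim, I would show $\mathcal{S}$ is star-shaped about $\ec$ and then sample it in polar coordinates. Writing $\eta=\ec+\rho v$ with $\rho\ge 0$ and $v\in\mathbb{S}^{d-1}$, strict convexity of $\Psi^{*}$ makes $g_v(\rho)=\D_{\Psi^{*}}(\ec\Vert\ec+\rho v)$ strictly increasing in $\rho$ from $g_v(0)=0$. Consequently, for every direction $v$ there is a unique radius $\rho^{*}(v)$ with $g_v(\rho^{*}(v))=\Delta$, and the segment $\{\ec+\rho v : 0\le\rho\le\rho^{*}(v)\}$ is exactly $\mathcal{S}\cap(\ec+\mathbb{R}_{+}v)$. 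Reading Alg.~\ref{alg:exact_sampling} as drawing $v\sim\mathrm{Unif}(\mathbb{S}^{d-1})$ and $\rho$ with density $\propto\rho^{d-1}\mathbf{1}_{[0,\rho^{*}(v)]}$, the joint polar density is $\propto\rho^{d-1}\mathbf{1}_{\mathcal{S}}$; since the Lebesgue volume element in $\mathbb{R}^d$ factors as $\rho^{d-1}d\rho\,d\sigma(v)$, the induced distribution on $\eta$ is precisely $\mathrm{Unif}(\mathcal{S})$.

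For the quadratic special case, a quadratic $\Psi$ forces $\nabla\Psi^{*}$ to be affine with constant Jacobian determinant. Affine bijections push Lebesgue-uniform measures to Lebesgue-uniform measures on the image set, so $\eta_{\mathrm{new}}\sim\mathrm{Unif}(\mathcal{S})$ transfers to $\theta_{\mathrm{new}}\sim\mathrm{Unif}(\mathcal{B}_\Delta(\bc))$. Outside the quadratic setting, the change of variables introduces a non-constant Jacobian $|\det\nabla^{2}\Psi^{*}(\eta)|$, which is exactly why the theorem downgrades its claim to mere membership in $\mathcal{B}_\Delta(\bc)$ in the general case.

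The main obstacle I anticipate is not algebraic but topological: the uniform measure on $\mathcal{S}$ is only well-defined if $\mathcal{S}$ has finite Lebesgue volume, i.e., if $\rho^{*}(v)<\infty$ for all $v$. For a regular minimal exponential family, $g_v(\rho)\to\infty$ either as $\ec+\rho v$ approaches $\partial\mathcal{E}$ or as $\rho\to\infty$ under steepness, which secures finite $\rho^{*}(v)$ and boundedness of $\mathcal{S}$; for the fixed-$\Sigma$ Gaussian this is automatic since $g_v(\rho)$ grows quadratically. I would therefore flag boundedness of $\mathcal{S}$ as an admissibility condition on $\Delta$ (trivially met in the quadratic instance that motivates the corollary) rather than try to prove it under maximal generality.
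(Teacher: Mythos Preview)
Your strategy is essentially the paper's: Legendre duality to identify $\nabla\Psi^{*}(\mathcal{S})=\mathcal{B}_\Delta(\bc)$, monotonicity of $g_v$ from strict convexity of $\Psi^{*}$, a polar-coordinate argument for uniformity, and the constant-Jacobian observation for the quadratic case. Your boundedness discussion is a welcome addition the paper leaves implicit.

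There is, however, a genuine gap in your uniformity step---and it is one the paper's own Step~3 shares. You claim the joint polar density is $\propto\rho^{d-1}\mathbf{1}_{\mathcal{S}}$, but the conditional $p(\rho\mid v)=d\,\rho^{d-1}/\rho^{*}(v)^{d}$ carries a direction-dependent normaliser, so multiplying by the uniform-sphere marginal of $v$ gives a joint density $\propto\rho^{d-1}\rho^{*}(v)^{-d}\,\mathbf{1}_{\mathcal{S}}$. Dividing by the polar Jacobian $\rho^{d-1}$ leaves the Lebesgue density of $\eta$ proportional to $\rho^{*}(v)^{-d}$, which is constant only when $\mathcal{S}$ is a Euclidean ball about $\ec$. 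Put differently, under $\mathrm{Unif}(\mathcal{S})$ the direction $v$ has marginal density $\propto\rho^{*}(v)^{d}$ on $\mathbb{S}^{d-1}$, \emph{not} uniform; since Algorithm~\ref{alg:exact_sampling} draws $v$ uniformly on the sphere, its output is not $\mathrm{Unif}(\mathcal{S})$ for a generic star body---including any non-spherical ellipsoid arising in the quadratic case. The paper's proof matches only the conditionals $p(\rho\mid v)$ and never checks the marginal of $v$, so it commits the same oversight. Your reasoning would go through only under the additional hypothesis that $\rho^{*}(v)$ is independent of $v$, which neither you nor the paper establishes and which fails in general.
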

\begin{proof}
See Appendix~\ref{appendix:proof}.
\end{proof}

\vspace{-4mm}
\begin{figure}[H]
\begin{minipage}{0.45\textwidth}
\begin{algorithm}[H]
\small
\setstretch{1.2}
\centering
\caption{\textsc{Trust-Region Sampler}}
\label{alg:exact_sampling}
\begin{algorithmic}[1]
\Require centroid $\ec$, radius $\Delta$ 
\Require radial divergence $g_v(\rho)$
\State Draw $v\sim \mathrm{Unif}(\mathbb S^{d-1})$ 
\State Root-solve \ $g_v(\rho_{\max})=\Delta$ 
\State Sample $u\sim\mathrm{Unif}[0,1]$ 
\State \textbf{Return} $\eta_{\text{new}}\gets\ec+u^{1/d}\,\rho_{\max}(v)\,v$
\end{algorithmic}
\end{algorithm}
\end{minipage}
\hfill
\begin{minipage}{0.45\textwidth}
\begin{algorithm}[H]
\small
\setstretch{1.2}
\centering
\caption{\textsc{Proxy Sampler}}
\label{alg:proxy}
\begin{algorithmic}[1]
\Require centroid $\ec$, radius $\Delta$
\Require Hessian $\mathrm H=\nabla^2\Psi^*(\ec)$
\State Draw $v\!\sim\!\mathrm{Unif}(\mathbb S^{d-1})$
\State Set $\widehat\rho_{\max}(v)\gets\sqrt{2\Delta/(v^\top \mathrm H v)}$
\State Sample $t\sim\mathrm{Unif}[-\widehat\rho_{\max},\widehat\rho_{\max}]$
\State \textbf{Return} $\eta_{\text{new}}\gets\ec+t\,v$
\end{algorithmic}
\end{algorithm}
\end{minipage}
\end{figure}

\vspace{-3mm}
\paragraph{Local Proxy Sampling \& Gaussian Case.}
While Alg.~\ref{alg:exact_sampling} is general and exact, every draw incurs a root–solving $g_v(\rho_{\max}) =\Delta$, which is expensive for high-dimensional parameterization (e.g., action sequences in MBRL). In practice, we \emph{locally} approximate \(g_v(\rho) \approx \frac{1}{2} \rho^2 v^\top \mathrm H v\) at the centroid \(\ec\) up to second order, where 
\(
  \mathrm{H}\;=\;\nabla^{2}\Psi^{*}(\ec).
\)
This yields an \emph{ellipsoidal} trust region \(\widehat{\mathcal S}\) with a \textbf{closed-form} maximal radius
\[
  \widehat{\mathcal S}
  \;=\;
  \bigl\{\eta:\;
        (\eta-\ec)^{\!\top} \mathrm H (\eta-\ec)\le 2\Delta\bigr\}\;\; \Longrightarrow\;\;
        \widehat\rho_{\max}(v)
  \;=\;
  \sqrt{2\Delta\big/ (v^\top \mathrm{H}v)},
\]
which requires only one dot product and a square root (See the \textsc{Proxy Sampler} in Alg.~\ref{alg:proxy}).

For diagonal Gaussian action-sequence planner (common in MBRL~\cite{chua2018deep, zhang2022simple,deisenroth2011pilco}), the Hessian \(\mathrm{H} = \mathrm{diag}(h_1, \dots, h_d)\) itself is diagonal and the resulting trust region becomes axis-aligned with principal radii \(\sqrt{2\Delta / h_i}\). In such cases, sampling further reduces to simple coordinate-wise operations (see Appendix~\ref{appendix:proxy_sampling} for details).

\begin{table}[H]
\centering
\small
\setlength{\tabcolsep}{5pt}
\caption{Major operations in the CEM loop.  Here \(n\) is the number of workers, \(d\) the parameter dimension, and \(m\) the (typically small) number of iterations in the root solver of Alg.~\ref{alg:exact_sampling}.  All cost are worst-case. Quantities marked $\dagger$ are already computed in CEM.}
\label{tab:cem_costs}
\begin{tabular}{@{}lll@{}}
\toprule
\textbf{Operation} & \textbf{Cost} & \textbf{Comment}\\
\midrule
Centroid  & \(O(nd)\) & Weighted average of $\eta_i^{\dagger}$\\
Relevance score\;($\gamma_i$) & \(O(d)\) per worker & One inner product $+\;\Psi(\theta_i)$ (closed form)\\
Exact sampler ($d\!\lesssim\!100$) & \(O(d+m)\) & Root solve for $\rho_{\max}$ (e.g., secant method)\\
Proxy sampler ($d\!\gg\!100$) & \(O(d)\) & Closed-form $\widehat \rho_{\max}$\\
\bottomrule
\end{tabular}
\end{table}

\paragraph{Summary.} By operating with the mean parameterization of the exponential family, \textbf{$\mathcal{BC}$-EvoCEM}'s add-on operations incur negligible overhead. The empirical means required for the Bregman centroid are available from the CEM log–likelihood computation.  All subsequent steps (see Table~\ref{tab:cem_costs}) scale linearly with the parameter dimension and remain trivial compared to environment roll–outs.
Moreover, the geometric interpretation of our method is remarkably intuitive and \emph{Euclidean‐like}: the Bregman centroid coincides with a weighted arithmetic mean, and the proxy trust region resembles an ellipsoidal neighbourhood under a natural affine transformation. 
\clearpage

\section{Bregman Centroid Guided MPC}
\label{sec:mpc_bregman}
The proposed \textbf{$\mathcal{BC}$-EvoCEM} integrates elegantly into the MPC pipeline for MBRL, where the trajectory optimization is performed iteratively in a receding horizon fashion. Instead of warm starting CEM optimizer at time~$t+1$ by shifting the previous solution~\cite{chua2018deep} or restarting from scratch, we use the performance-weighted Bregman centroid of the $K$ independent CEM solutions to initialize the next iteration. To prevent ensemble collapse (i.e., $\mathrm{IR}\to0$), we periodically replace the least-contributing workers by candidates sampled from the trust region.

\begin{wrapfigure}{R}{0.53\textwidth}
\vspace{-4mm}
\begin{minipage}{0.5\textwidth}
\begin{algorithm}[H]
\setstretch{1.2}
\caption{(schematic) Drop-in MPC Wrapper for MBRL.}
\label{alg:bc_mpc_schematic}
\begin{algorithmic}[1]
  \Require $K$ CEM workers, buffer $\mathcal{D}$
  \For{each training iteration}
  \State Train dynamics model $\tilde f$ on 
  $\mathcal{D}$
  \State Initialize Bregman Centroid
  \For{each control step $t = 1,\dots,H$}
    \State Warm start CEM workers by BC
    \State Rollouts by $\tilde f$ \& Update CEM workers
    \State Compute Bregman Centroid
    \State (periodic) Score \& Replace
    \State Execute the best worker's 1st action
    \State Add transitions to $\mathcal{D}$
  \EndFor
  \EndFor
\end{algorithmic}
\end{algorithm}
\end{minipage}
\vspace{-8mm}
\end{wrapfigure}

Building on the stochastic optimization techniques in Sec.~\ref{sec:stoch_opt}, we implement this strategy as a \emph{drop-in MPC wrapper for MBRL} (see Alg.~\ref{alg:bc_mpc_schematic}) that \textbf{1)} preserves the internal CEM update unchanged, \textbf{2)} enforces performance–diversity control via the Bregman centroid, \textbf{3)} and incurs only a few additional vector operations per control step.

Here, the Bregman centroid encapsulates the CEM ensemble’s consensus on promising action-sequences while implicitly encoding optimality-related uncertainties in the warm start. The trust-region based replacement then reinjects diversity in regions where the model is confident. Therefore, this implementation delivers the benefits of guided evolution with the simplicity of a standard warm-start heuristic.

\section{Experimental Results}
\label{sec:result}

\subsection{Motivational Example}
\begin{wrapfigure}{l}{0.5\textwidth}
\centering
\includegraphics[width=0.97\linewidth]{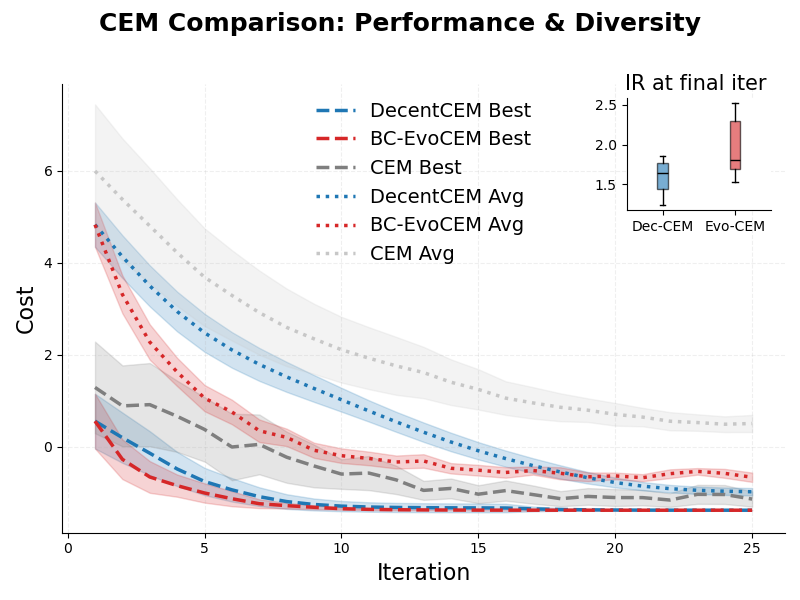} 
\caption{Performance comparison for vanilla, decentralized, and our CEM methods. Solid/dashed lines show the mean/best cost, shaded bands ±1 std. Information‐radius (IR) at iter 25 is shown. }
\label{fig:compar}
\end{wrapfigure}

We first demonstrate our method on a multi-modal optimization problem with the cost function (Fig.~\ref{fig:demo} shows the cost landscape with multiple attraction basins):
\[
    J(\boldsymbol{x})\;=\; \sin(3x_1) + \cos(3x_2) + 0.5\;\|\boldsymbol{x}\|_2^2.
\]
We compare our method against (1) vanilla CEM and (2) decentralized CEM~\cite{zhang2022simple}, with the same parametric distribution $p_{\theta} =\mathcal{N}(\theta, 0.5^2 I)$. Our approach (red in Fig.~\ref{fig:compar}) demonstrates faster convergence in both \emph{Best} and \emph{Average} costs. Importantly, the trust-region sampling maintains solution diversity, as shown by the final IR values (i.e., sample variance in this case).
\clearpage

\newpage
\vspace{-4mm}
\subsection{Navigation Task}\label{sec:planning}
We consider a cluttered 2D point-mass navigation task. Figure~\ref{fig:nav} (left) visualizes trajectories from a fully decentralized CEM, which disperse widely and frequently deviate from the start–goal line. In contrast, \textbf{$\mathcal{BC}$-EvoCEM} (right) maintains a tight cluster of trajectories around the Bregman‐centroid path (green dashed line), producing a more diverse and goal‐directed planning. Notably, the centroid itself is not guaranteed to avoid obstacles as it serves only as an information‐geometric summary of all workers.  Quantitatively, \textbf{$\mathcal{BC}$-EvoCEM} yields significant improvements in both average and best cost without incurring  noticeable computational overhead (see Appendix~\ref{appendix:planning} for a cost summary.)

\begin{figure}[t]
    \centering
    \includegraphics[width=0.9\linewidth]{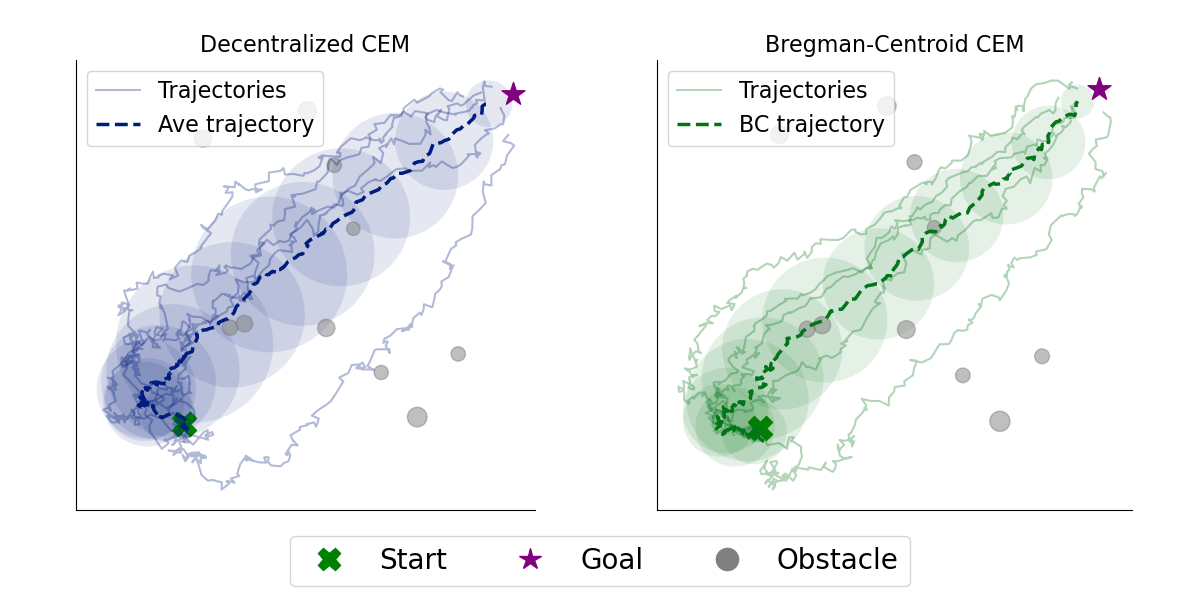}
    \caption{Trajectory distributions from decentralized CEM (left) and Bregman–centroid guided CEM (right) on a point-mass navigation task. The \emph{representatives} of each method (dashed line) are the average and Bregman-centroid trajectory.}
    \label{fig:nav}
\end{figure}

\subsection{Bregman Centroid Guided MPC in MBRL}\label{experiments:mbrl}
\paragraph{Baselines and implementation.}
Our MBRL study builds on the \textsc{PETS} framework~\cite{chua2018deep} and
the \textsc{DecentCEM} implementation~\cite{zhang2022simple}. All components, including dynamics learning, experience replay, and per–worker CEM updates, remain untouched. The proposed \emph{Bregman Centroid Guided MPC} is realized as a simple drop-in wrapper (see Alg.~\ref{alg:bc_mpc_schematic}) for warm starting CEM optimizers. This plug-and-play feature makes the method readily portable to any planning-based MBRL codebase.

\paragraph{Deterministic vs.\ probabilistic ensemble dynamics.}
To isolate the effect of the trajectory optimizer, we fix the PETS baseline and compare our proposed method against both vanilla and decentralized CEM (DecentCEM) under two distinct model classes: 1) a \textbf{deterministic} dynamics model trained by minimizing mean-squared prediction error, and 2) a \textbf{probabilistic ensemble} dynamics with trajectory sampling~\cite{chua2018deep} (full experimental results can be found in Appendix~\ref{appendix:exp_setup}):
\begin{itemize}
\item \emph{Deterministic model.}  
  Our method achieves \textbf{faster learning} and higher
  asymptotic return in most tasks (see Fig.~\ref{fig:train_return_de}). In vanilla CEM the
  sampling covariance collapses rapidly, and in decentralized CEM each
  worker collapses independently.  By contrast, the Bregman
  centroid pulls workers toward promising regions \emph{while the sampling maintains ensemble effective size}. The resulting performance gap therefore quantifies the benefit of injecting \emph{guided} optimality-related randomness during exploration.  
\item \emph{Probabilistic ensemble model.}  
    When both \emph{epistemic} and \emph{aleatoric} uncertainty are captured through model-based trajectory sampling, the performance differences among the three optimizers become statistically indistinguishable (see Fig.~\ref{fig:train_return_pe}). We hypothesize that in such cases, the intrinsic stochasticity of the model induces sufficient trajectory dispersion. Hence, additional optimizer-level
    exploration yields diminishing returns. 
\end{itemize}
\clearpage

\begin{figure}[t]
    \centering
    \includegraphics[width=0.97\linewidth]{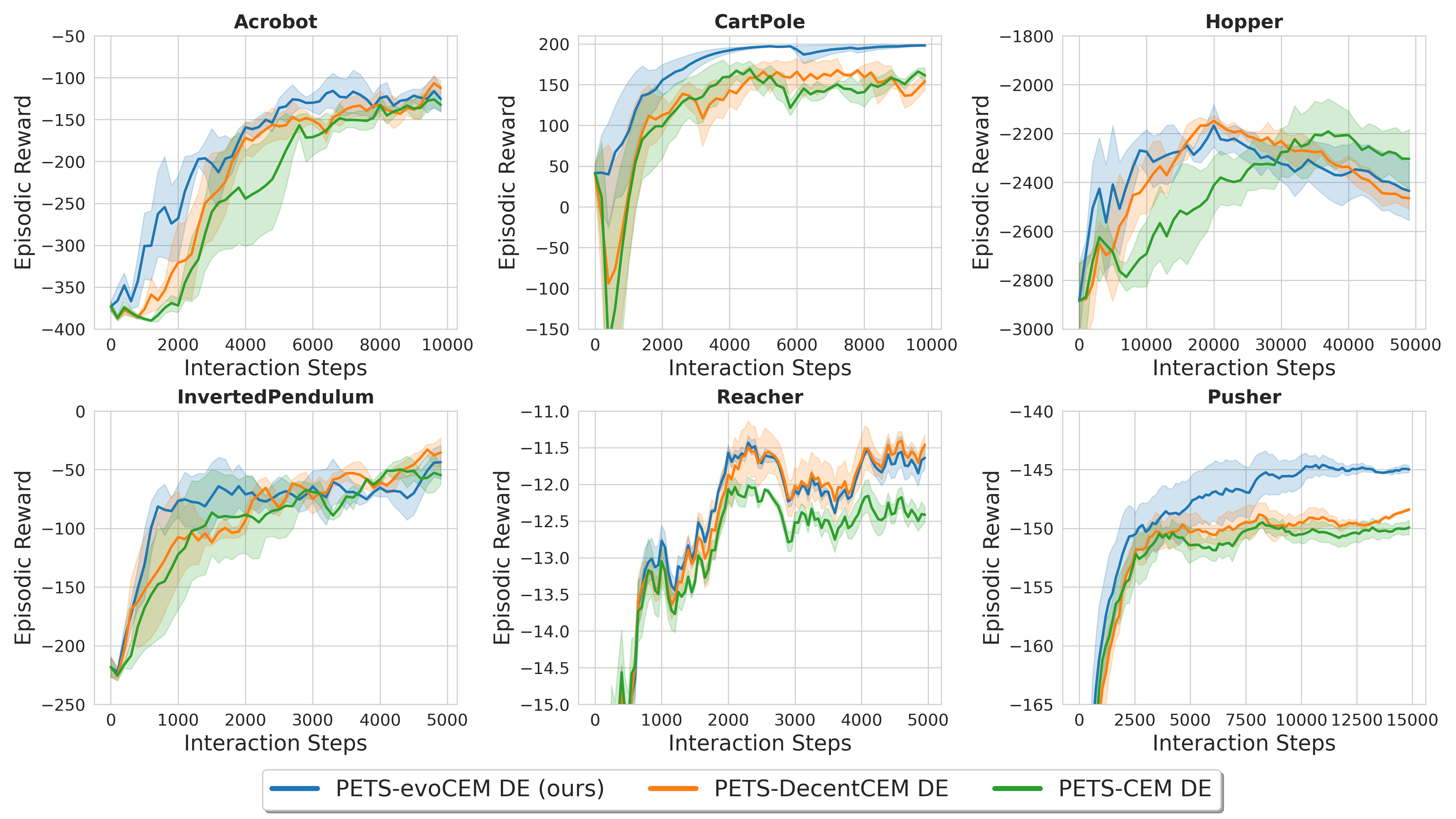}
    \caption{Training return curves across six control tasks using PETS with different CEM-based optimizers. All methods use the \emph{deterministic dynamics model}. Curves show mean performance over 3 random seeds.}
    \label{fig:train_return_de}
\end{figure}
\vspace{-8mm}
\paragraph{Implication on Uncertainties.} The controlled study highlights two distinct yet coupled sources of uncertainty in
model-based RL: \emph{model uncertainty} and \emph{optimality uncertainty}. Improving the dynamics model (e.g., probabilistic ensembles) addresses the former, whereas a diversity-informed optimizer (e.g., \textbf{$\mathcal{BC}$-EvoCEM})  directly addresses the latter. Once the dynamics model approaches its performance cap (or its representational capacity is bottlenecked), optimality uncertainty predominates; in this regime, geometry-informed exploration such as \textbf{$\mathcal{BC}$-EvoCEM} in the action space delivers a complementary boost.

\begin{figure}[H]
    \centering    
    \includegraphics[width=0.97\linewidth]{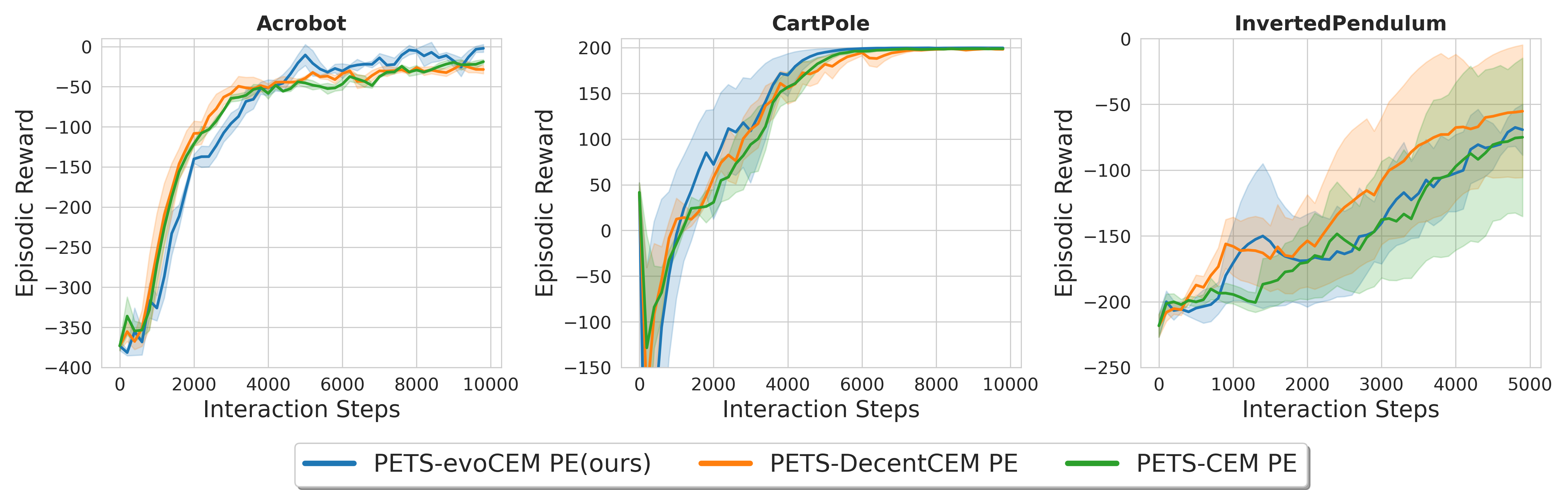}
    \caption{Training return curves across 3 control tasks using PETS with different CEM-based optimizers. All methods use the \emph{probabilistic ensemble dynamics model with trajectory sampling}~\cite{chua2018deep}. Curves show mean performance over 3 random seeds.}
    \label{fig:train_return_pe}
\end{figure}

\section{Conclusion}
\label{sec:conclusion}
We introduced \textbf{$\mathcal{BC}$-EvoCEM}, a lightweight ensemble extension of the Cross-Entropy Method that has (i) principled information aggregation and (ii) diversity-driven exploration with near-zero computation overhead. Across optimization problems and model-based RL benchmarks, \textbf{$\mathcal{BC}$-EvoCEM} demonstrates faster convergence and attains higher-quality solutions than vanilla and decentralized CEM. Its plug-and-play design enables easy integration into MPC loops while preserving the algorithmic simplicity that makes CEM appealing in the first place.


\clearpage
\section*{Limitations}
In this section, we outline several theoretical and empirical limitations of the proposed \textbf{$\mathcal{BC}$-EvoCEM} and provide potential directions for addressing them in future work.

\paragraph{Theoretical Limitations.} All information-geometric arguments (closed-form centroid, ellipsoidal trust region, likelihood-based ranking) hold only for \emph{regular exponential-family} distributions in mean coordinates. This restriction limits the expressiveness of the CEM distributions. Future work will transfer these ideas to richer models via \emph{geometric-preserving} transport maps~\cite{villani2008optimal}. In addition, while we prove the centroid and repawned CEM workers remain inside a Bregman ball, the method still lacks global optimality guarantees and convergence analysis. It inherits these limitations from CEM. A promising direction is to consider the proposed \textbf{$\mathcal{BC}$-EvoCEM} in the stochastic mirror-descent framework~\cite{ahn2021efficient}, which may provide non-asymptotic convergence bounds via primal-dual relationship. 

\paragraph{Empirical Limitations.} All experiments are simulations. Real-time performance of the proposed \textbf{$\mathcal{BC}$-EvoCEM} on real-world robotic platforms remains untested. Future works will deploy \textbf{$\mathcal{BC}$-EvoCEM} on computation-limited hardware to evaluate its performance.

\acknowledgments{
This work was supported in part by NASA ULI (80NSSC22M0070), Air Force Office of Scientific Research (FA9550-21-1-0411), NSF CMMI (2135925), NASA under the Cooperative Agreement 80NSSC20M0229, and NSF SLES (2331878). Marco Caccamo was supported by an Alexander von Humboldt Professorship endowed by the German Federal Ministry of Education and Research.}

\bibliography{example}  

\newpage
\appendix
\section{Relevance Score as Likelihood Evaluation}\label{appendix:scores}
Recall that the Bregman divergence induced by $\Psi$ is
\[
\D_{\Psi}(\theta\parallel\bc)
=\Psi(\theta)-\Psi(\bc)
 -\bigl\langle\nabla\Psi(\bc),\,\theta-\bc\bigr\rangle.
\]
Let $\theta=\theta_{i}$ and denote the dual centroid $\ec=\nabla\Psi(\bc)$.  Expand
\begin{align*}
\gamma_{i}
&= w_{i}\,\D_{\Psi}(\theta_{i}\parallel\bc)\\
&= w_{i}\Bigl[
       \Psi(\theta_{i})
     - \Psi(\bc)
     - \bigl\langle \ec,\,\theta_{i}-\bc\bigr\rangle
   \Bigr]\\
&= w_{i}\Bigl[
       \Psi(\theta_{i})
     - \Psi(\bc)
     - \langle\ec,\,\theta_{i}\rangle
     + \langle\ec,\,\bc\rangle
   \Bigr].
\end{align*}
Since $\Psi(\bc)$ and $\langle\ec,\bc\rangle$ are
independent of~$i$, they are constant across workers and can be dropped
when ranking.  Then, we have
\[
\gamma_{i}
\;\propto\;
w_{i}\Bigl[\Psi(\theta_{i})-\langle\ec,\,\theta_{i}\rangle\Bigr]
\;=\;
-\,w_{i}\,\Bigl[\langle\theta_{i},\,\ec\rangle-\Psi(\theta_{i})\Bigr].
\]
Define the per–sample log-likelihood of the exponential family in
canonical form by
\[
   \ell(\theta;x) \;=\; \langle\theta,x\rangle-\Psi(\theta).
\]
Therefore,
\[
   \gamma_i \;\propto\; -\,w_i\,\ell\bigl(\theta_i;\ec\bigr).
\]

\section{Local Proxy Sampling \& Gaussian Case}\label{appendix:proxy_sampling}
To address the curse of dimensionality in the root solving step in Algorithm~\ref{alg:exact_sampling}, we consider a local approximation of the (dual) trust region 
\[
   \mathcal{S}
   := \{\eta\in\mathcal E : \D_{\Psi^{\!*}}(\ec\parallel\eta)\le\Delta\},
\] 
where \(\Psi^{\!*}\) is the convex conjugate of \(\Psi\). By the definition of the \emph{radial Bregman Divergence} (see~Def.\ref{def:rDiv}), we have \(g_v(0)=0\) and \(\nabla_\rho g_v(0)=0\) at \(\ec\).  A Taylor expansion about \(\rho=0\) gives
\begin{equation}\label{eq:quadratic_proxy}
g_v(\rho)
\;=\;
\frac12\,\rho^2\,v^\top
\underbrace{\nabla^{2}_{\eta}\D_{\Psi^*}\bigl(\ec\|\eta\bigr)\big|_{\eta=\ec}}_{=:\,\mathrm H}
\,v
\;+\;\mathcal O(\rho^3)
\;\approx\;
\frac12\,\rho^2\,v^\top \mathrm H\,v,
\end{equation}
where
\[
\mathrm H
=\nabla^2_\eta\,\D_{\Psi^*}(\ec\|\eta)\big|_{\eta=\ec}
=\nabla^2\Psi^*(\ec)
=\bigl[\nabla^2\Psi(\theta_c)\bigr]^{-1}.
\]
Substituting this quadratic approximation \eqref{eq:quadratic_proxy} into the trust region constraint \(g_v(\rho)\le\Delta\) yields
\[
\frac12\,\rho^2\,v^\top \mathrm H\,v \;\le\;\Delta
\quad\Longrightarrow\quad
\rho \;\le\;
\widehat\rho_{\max}(v)
:=\sqrt{\frac{2\Delta}{v^\top \mathrm H\,v}}.
\]
Hence the proxy trust region in mean space is the Mahalanobis ball
\[
\widehat{\mathcal S}
=\bigl\{\eta:\;(\eta-\ec)^\top \mathrm H\,(\eta-\ec)\le2\Delta\bigr\}.
\]

\paragraph{Diagonal Gaussian Case.}  
Consider the family
\(p_{\theta}(x)=\mathcal N\!\bigl(\mu,\operatorname{diag}(\sigma^{2})\bigr)\)
with natural parameters
\(
\theta_{1i}=\mu_i/\sigma_i^{2},\;
\theta_{2i}=-\tfrac12\sigma_i^{-2}.
\)
Its cumulant function is given by
\[
\Psi(\theta)
=
\sum_{i=1}^{d}
\Bigl[
-\frac{\theta_{1i}^{2}}{4\theta_{2i}}
-\frac12\log(-2\theta_{2i})
+\frac12\log(2\pi)
\Bigr],
\]
and the convex dual in mean coordinates
\(\eta_i=\mu_i\) (fixing \(\sigma_i^{2}\)) is simply
\[
\Psi^{*}(\eta)
=
\frac12\sum_{i=1}^{d}
\frac{(\eta_i-\mu_i)^{2}}{\sigma_i^{2}}
+\text{const}.
\]
Here, the Hessian is
\(
\mathrm H=\nabla^{2}\Psi^{*}(\eta)
      =\operatorname{diag} \!\bigl(\sigma_1^{-2},\dots,\sigma_d^{-2}\bigr),
\)
so the Mahalanobis ball \(\widehat S\) becomes \emph{axis-aligned}:
\[
\Big[\; \boldsymbol{\eta_{c^i}}-\sqrt{2\Delta\,\sigma_i^{2}},\;
      \boldsymbol{\eta_{c^i}}+\sqrt{2\Delta\,\sigma_i^{2}}\; \Big], \quad i=1,\dots,d.
\]
Hence, sampling reduces to independent coordinate draws:
\[
\eta_i
\;\sim\;
\mathrm{Unif}
\Big[\; \boldsymbol{\eta_{c^i}}-\sqrt{2\Delta\,\sigma_i^{2}},\;
      \boldsymbol{\eta_{c^i}}+\sqrt{2\Delta\,\sigma_i^{2}}\; \Big], \quad i=1,\dots,d.
\]

\begin{remark}[On the fixed variance]
During CEM update, the empirical covariance often collapses, becoming low–rank or even singular; in other words, the \emph{mean} component $\mu$ quickly dominates the search directions.  
A practical trick here is to \emph{freeze} the diagonal variance vector $\sigma^{2}$ after a few iterations (or to enforce a fixed lower bound). In practice, we perform such fix-variance trick during the trust region sampling step for high-dimensional planning tasks, including the MPC implementation for MBRL in Sec.~\ref{experiments:mbrl}. 

Because the Hessian matrix (Eq.~\eqref{eq:quadratic_proxy}) is block–diagonal (a diagonal sub‐block for the mean and a sub‐block for the variances), we can safely use the \textsc{Proxy Sampler}~\ref{alg:proxy} to perform such \emph{coordinate‐wise} updates \textbf{exclusively} on the mean block. This also avoids numerical issues from near‐singular covariances.
\end{remark}

\newpage
\section{Proof of Theorem~\ref{thm:sampling_valid}}\label{appendix:proof}
\subsection{Preliminaries}\label{sec:prelim}
Throughout we work on $\Theta,\mathcal E\subset\mathbb R^{d}$ equipped with
Lebesgue measure $\lambda^{d}$.  We write
$\sigma_{d-1}$ for the surface measure on the unit sphere
$\mathbb S^{d-1}:=\{v\in\mathbb R^{d}\mid\|v\|_{2}=1\}$.
The following facts are used (see~\cite{villani2008optimal, schneider2013convex}).

\begin{fact}[Polar coordinates]\label{fact:polar}
Under the polar map
$(\rho,v)\mapsto\eta=\eta_{c}+\rho v$ with $\rho\ge0,\;v\!\in\!\mathbb S^{d-1}$,
the $d$-dimensional Lebesgue volume element factorizes as
$d\eta=\rho^{d-1}\,d\rho\,d\sigma_{d-1}(v)$.
\end{fact}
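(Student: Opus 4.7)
The plan is to analyze Alg.~\ref{alg:exact_sampling} in polar coordinates about $\ec$ and then transport the conclusions back to $\theta$-space via the Legendre bijection $\nabla\Psi:\Theta\to\mathcal E$. The three claims split naturally into (i) showing $\mathcal S$ is star-shaped about $\ec$ so that $\rho_{\max}(v)$ is well-defined, (ii) identifying the pushforward law of $\eta_{\mathrm{new}}$ using Fact~\ref{fact:polar}, and (iii) transferring the resulting statements to $\theta$-space, including the quadratic refinement.

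For (i), I will fix $v\in\mathbb S^{d-1}$ and treat $g_v(\rho)=\D_{\Psi^{\!*}}(\ec\Vert\ec+\rho v)$ as a scalar function of $\rho\ge 0$. Direct differentiation gives $g_v(0)=0$, $g_v'(0)=0$ (the gradient of a Bregman divergence in its second argument vanishes on the diagonal), and $g_v''(\rho)=v^{\top}\nabla^{2}\Psi^{\!*}(\ec+\rho v)\,v>0$ by strict convexity of $\Psi^{\!*}$ on $\mathcal E$. Hence $g_v$ is strictly increasing on $(0,\infty)$, the root $\rho_{\max}(v)$ from line~2 of the algorithm is unique, and $\mathcal S=\{\ec+\rho v:v\in\mathbb S^{d-1},\;0\le\rho\le\rho_{\max}(v)\}$ is star-shaped about $\ec$. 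The containment $\eta_{\mathrm{new}}\in\mathcal S$ then drops out because $u^{1/d}\le 1$ forces $\rho\le\rho_{\max}(v)$.

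For (ii), I will compute the pushforward density of the sampler. Conditional on $v$, the radius $\rho=u^{1/d}\rho_{\max}(v)$ with $u\sim\mathrm{Unif}[0,1]$ has CDF $(r/\rho_{\max}(v))^d$, hence conditional density $d\,r^{d-1}/\rho_{\max}(v)^d$ on $[0,\rho_{\max}(v)]$. Combining with $v\sim\mathrm{Unif}(\mathbb S^{d-1})$ and converting polar to Lebesgue via Fact~\ref{fact:polar} yields a Lebesgue density of $\eta_{\mathrm{new}}$ of the form $p(\eta)=d/(|\mathbb S^{d-1}|\,\rho_{\max}(v(\eta))^d)$ supported on $\mathcal S$; normalising against $\mathrm{Vol}(\mathcal S)=\tfrac{1}{d}\int_{\mathbb S^{d-1}}\rho_{\max}(v)^d\,d\sigma_{d-1}(v)$ identifies this with the uniform law on $\mathcal S$. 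For (iii), I will invoke the Fenchel--Young equality $\Psi(\theta)+\Psi^{\!*}(\eta)=\langle\theta,\eta\rangle$ on dual pairs to obtain the identity $\D_{\Psi}(\theta\Vert\theta')=\D_{\Psi^{\!*}}(\nabla\Psi(\theta')\Vert\nabla\Psi(\theta))$; applied to $(\theta,\theta')=(\theta_{\mathrm{new}},\bc)$ this gives $\D_{\Psi}(\theta_{\mathrm{new}}\Vert\bc)=\D_{\Psi^{\!*}}(\ec\Vert\eta_{\mathrm{new}})\le\Delta$, so $\theta_{\mathrm{new}}\in\mathcal B_{\Delta}(\bc)$. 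When $\Psi$ is quadratic, $\nabla\Psi$ is an affine bijection with constant Jacobian, and the change-of-variable formula then pushes the uniform law on $\mathcal S$ forward to the uniform law on $\mathcal B_{\Delta}(\bc)$ since the Jacobian cancels in the normalisation.

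The hardest part will be step (ii): the polar-to-Cartesian bookkeeping must be handled carefully so that the resulting density normalises against $\mathrm{Vol}(\mathcal S)$ in the prescribed way, and I will need to check the technical proviso that the ray $\rho\mapsto\ec+\rho v$ stays inside the open mean domain $\mathcal E$ for all $\rho\in[0,\rho_{\max}(v)]$ so that $\nabla^{2}\Psi^{\!*}$ remains positive definite along the whole segment (otherwise $\rho_{\max}(v)$ may fail to exist and the star-shape argument needs a boundary modification). Once that is pinned down, both the Legendre duality identity and the affine Jacobian cancellation for the quadratic case are routine one-line calculations.
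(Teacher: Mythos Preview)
Your proposal does not address the stated result. Fact~\ref{fact:polar} is the standard polar-coordinate decomposition of Lebesgue measure in $\mathbb R^{d}$; the paper records it as a textbook preliminary (with citations to \cite{villani2008optimal,schneider2013convex}) and offers no proof of its own. What you have sketched is instead a proof of Theorem~\ref{thm:sampling_valid}, \emph{using} Fact~\ref{fact:polar} as an ingredient.

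If Theorem~\ref{thm:sampling_valid} was your actual target, then your outline tracks the paper's argument in Appendix~\ref{appendix:proof} closely: your step~(i) is the content of Lemma~\ref{lem:radial_mono}; your step~(ii) is the paper's Step~3 (matching the conditional radial density to Fact~\ref{fact:uniform}); and your step~(iii) corresponds to the paper's Steps~2 and~4. Your use of the Fenchel--Young identity $\D_{\Psi}(\theta\Vert\theta')=\D_{\Psi^{*}}\bigl(\nabla\Psi(\theta')\Vert\nabla\Psi(\theta)\bigr)$ to certify $\theta_{\mathrm{new}}\in\mathcal B_{\Delta}(\bc)$ is slightly more explicit than the paper, which just invokes the dual description of the ball, but the substance is identical. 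One caution when you execute step~(ii): the Cartesian density you write down, $p(\eta)=d\big/\bigl(|\mathbb S^{d-1}|\,\rho_{\max}(v(\eta))^{d}\bigr)$, visibly depends on the direction $v(\eta)$, so ``normalising against $\Vol(\mathcal S)$'' does not by itself make it constant; the sampler's directional marginal is uniform on $\mathbb S^{d-1}$, whereas the uniform law on a star-shaped body has directional marginal proportional to $\rho_{\max}(v)^{d}$. The paper's Step~3 glosses over the same point, so this is worth scrutinising rather than assuming.
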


\begin{fact}[Uniform distribution]\label{fact:uniform}
Let $\rho_{\max}:\mathbb S^{d-1}\!\to\!(0,\infty)$ be measurable and define
\[
\mathcal S
   :=\bigl\{\eta_c+\rho v : v\in\mathbb S^{d-1},\;0\le\rho\le\rho_{\max}(v)\bigr\}.
\]
Then
\[\displaystyle
   \operatorname{Vol}(\mathcal S)
   = \frac1d\!\int_{\mathbb S^{d-1}}\!\rho_{\max}(v)^{d}\,d\sigma_{d-1}(v)
\]
and the uniform law on $\mathcal S$ has a radial conditional density
\[\displaystyle
   f_{\mathcal S}(\rho|v)
   = \frac{d\,\rho^{d-1}}{\rho_{\max}(v)^{d}},
   \qquad 0\leq \rho \leq \rho_{\max}(v).
\]
\end{fact}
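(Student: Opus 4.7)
The plan is to verify the two claims of the theorem—that Alg.~\ref{alg:exact_sampling} outputs $\eta_{\mathrm{new}}\sim\mathrm{Unif}(\mathcal{S})$ and that its primal image lies in $\mathcal{B}_{\Delta}(\bc)$—and then specialize to the quadratic case. The essential ingredients are the polar volume decomposition (Fact~\ref{fact:polar}), the radial form of the uniform law on a star-shaped region (Fact~\ref{fact:uniform}), the strict monotonicity of $g_v$, and the Bregman-duality identity $\D_{\Psi^{\!*}}(\ec\Vert\eta)=\D_{\Psi}(\theta\Vert\bc)$ for dual coordinates $\eta=\nabla\Psi(\theta)$.

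First I would establish that $\rho_{\max}(v)$ is well-defined and that every radius up to $\rho_{\max}(v)$ keeps the sample inside the dual ball. From $g_v(0)=0$, $g_v'(0)=0$, and $g_v''(\rho)=v^{\top}\nabla^{2}\Psi^{\!*}(\ec+\rho v)\,v>0$, strict convexity of $\Psi^{\!*}$ makes $g_v$ strictly increasing on the maximal interval where $\ec+\rho v\in\mathcal{E}$, so $g_v(\rho_{\max})=\Delta$ has a unique positive root. Since $u^{1/d}\rho_{\max}(v)\le\rho_{\max}(v)$, monotonicity yields $\D_{\Psi^{\!*}}(\ec\Vert\eta_{\mathrm{new}})\le\Delta$; applying the Bregman-duality identity converts this to $\D_{\Psi}(\theta_{\mathrm{new}}\Vert\bc)\le\Delta$, i.e. $\theta_{\mathrm{new}}\in\mathcal{B}_{\Delta}(\bc)$.

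For the distributional claim I would push the product law $\mathrm{Unif}(\mathbb{S}^{d-1})\otimes\mathrm{Unif}[0,1]$ forward through the map $(v,u)\mapsto\ec+u^{1/d}\rho_{\max}(v)\,v$. The change of variable $\rho=u^{1/d}\rho_{\max}(v)$ has Jacobian $du/d\rho=d\rho^{d-1}/\rho_{\max}(v)^{d}$, so the conditional density of $\rho$ given $v$ matches exactly the radial density $d\rho^{d-1}/\rho_{\max}(v)^{d}$ identified in Fact~\ref{fact:uniform}. Combining this conditional with the polar volume element $d\eta=\rho^{d-1}\,d\rho\,d\sigma_{d-1}(v)$ (Fact~\ref{fact:polar}) and normalizing by $\mathrm{Vol}(\mathcal{S})$ recovers the Lebesgue-uniform density on $\mathcal{S}$. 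The quadratic case then follows by transporting uniformity along the bijection $\nabla\Psi$: when $\Psi$ is quadratic, $\nabla\Psi$ is affine, so $\mathcal{B}_{\Delta}(\bc)$ is the affine image of $\mathcal{S}$, and affine maps preserve Lebesgue-uniform distributions up to a constant Jacobian factor.

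The step I expect to stress-test most carefully is showing that the pushforward in the previous paragraph really does produce Lebesgue-uniform on $\mathcal{S}$: matching the radial conditional from Fact~\ref{fact:uniform} is the easy half, but the direction marginal induced by $\mathrm{Unif}(\mathbb{S}^{d-1})$ must agree with the direction marginal of $\mathrm{Unif}(\mathcal{S})$, which in general is proportional to $\rho_{\max}(v)^{d}$. Verifying compatibility—for instance by invoking a symmetry property of $\rho_{\max}$ arising from the Bregman geometry, or by restricting the first claim to the quadratic regime in which $\rho_{\max}$ is constant and the marginals trivially agree—is, in my view, the crux of the argument, and is where the proof in Appendix~\ref{appendix:proof} will likely spend the bulk of its effort.
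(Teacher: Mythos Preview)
Your proposal targets the wrong statement. The item quoted is Fact~\ref{fact:uniform}, a preliminary lemma about star-shaped regions that the paper does not prove at all---it is simply recorded with a citation to \cite{villani2008optimal,schneider2013convex}. What you have written is instead a proof sketch for Theorem~\ref{thm:sampling_valid}, which \emph{uses} Fact~\ref{fact:uniform} as an input. A direct argument for Fact~\ref{fact:uniform} is a two-line polar-coordinate computation: by Fact~\ref{fact:polar},
\[
\operatorname{Vol}(\mathcal S)=\int_{\mathbb S^{d-1}}\!\int_{0}^{\rho_{\max}(v)}\rho^{d-1}\,d\rho\,d\sigma_{d-1}(v)=\frac1d\int_{\mathbb S^{d-1}}\rho_{\max}(v)^{d}\,d\sigma_{d-1}(v),
\]
and dividing the joint polar density $\rho^{d-1}/\operatorname{Vol}(\mathcal S)$ by the $v$-marginal $\rho_{\max}(v)^{d}/(d\operatorname{Vol}(\mathcal S))$ yields the stated conditional. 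None of the Bregman machinery, the monotonicity of $g_v$, or the duality identity is relevant here.

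As an aside, the concern you flag in your final paragraph---that drawing $v\sim\mathrm{Unif}(\mathbb S^{d-1})$ does \emph{not} reproduce the direction marginal of $\mathrm{Unif}(\mathcal S)$, which is proportional to $\rho_{\max}(v)^{d}$---is well-founded and is in fact a gap that the paper's own Step~3 in Appendix~\ref{appendix:proof} glosses over. But that is a critique of Theorem~\ref{thm:sampling_valid}, not of Fact~\ref{fact:uniform}, which only records the correct radial conditional and says nothing about how to sample the direction.
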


\begin{fact}[Change of variables]\label{fact:change}
For $\Psi\!\in\!C^{2}(\Theta)$ strictly convex, the gradient map
$\nabla\Psi:\Theta\!\to\!\mathcal E$ is a $C^{1}$ diffeomorphism
with Jacobian $\det\nabla^{2}\Psi(\theta)$. For any non-negative
$\varphi$,
\[\displaystyle
   \int_{\Theta}\!\varphi(\theta)\,d\theta
   =\int_{\mathcal E}\!\varphi \bigl(\nabla\Psi^{-1}(\eta)\bigr)
     \Bigl|\det\nabla^{2}\Psi\!\bigl(\nabla\Psi^{-1}(\eta)\bigr)\Bigr|\,d\eta.
\]
\end{fact}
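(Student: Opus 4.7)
The plan is to treat the fact as two logically separate claims and dispatch each in turn. First I would establish that $\nabla\Psi\colon\Theta\to\mathcal E$ is a $C^1$ diffeomorphism with differential $\nabla^2\Psi(\theta)$; the integral identity then follows by invoking the standard multivariate change-of-variables theorem for $C^1$ diffeomorphisms between open subsets of $\mathbb R^d$. The key ingredients are strict convexity (for injectivity and positive-definiteness of the Hessian) and the inverse function theorem (for smoothness of the local inverse).

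For the diffeomorphism claim, the first step is that $\nabla\Psi\in C^1$, which is immediate from $\Psi\in C^2$. Second, I would prove injectivity by a one-dimensional convexity argument: if $\nabla\Psi(\theta_1)=\nabla\Psi(\theta_2)$ with $\theta_1\neq\theta_2$, restrict $\Psi$ to the segment between these points to obtain a strictly convex scalar function whose derivative coincides at the endpoints, a contradiction. Third, a direct componentwise differentiation identifies the Jacobian of $\theta\mapsto\nabla\Psi(\theta)$ as the Hessian $\nabla^2\Psi(\theta)$; strict convexity plus $C^2$-regularity forces this Hessian to be positive definite, so $\det\nabla^2\Psi(\theta)>0$ and the differential is invertible at every point.

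With injectivity and invertibility of the differential in place, the inverse function theorem supplies a $C^1$ local inverse around every image point. Surjectivity onto $\mathcal E$ is built into the definition $\mathcal E=\mathrm{int}\,\nabla\Psi(\Theta)$, and under the ``regular minimal'' exponential-family assumption invoked earlier in the paper the image $\nabla\Psi(\Theta)$ is itself open, so $\nabla\Psi(\Theta)=\mathcal E$ and the local inverses patch into a global $C^1$ inverse. The second assertion then follows by applying the classical change-of-variables formula with $\Phi=(\nabla\Psi)^{-1}\colon\mathcal E\to\Theta$; the Jacobian determinant of $\Phi$ at $\eta$ is $|\det\nabla^2\Psi(\nabla\Psi^{-1}(\eta))|^{-1}$ by the chain rule, and substituting $\theta=\nabla\Psi^{-1}(\eta)$ produces precisely the weighting stated in the fact.

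The step I expect to be the main obstacle is reconciling the definition $\mathcal E=\mathrm{int}\,\nabla\Psi(\Theta)$ with genuine surjectivity of $\nabla\Psi$ from $\Theta$ onto $\mathcal E$. For arbitrary strictly convex $\Psi\in C^2(\Theta)$ the image need not be open, so one must genuinely invoke the regularity and minimality hypotheses on the exponential family (essentially, that $\Theta$ is the open natural parameter domain and $\Psi$ is steep at the boundary). Once openness of $\nabla\Psi(\Theta)$ is granted, the remaining pieces are textbook material: the injectivity argument is elementary, the inverse function theorem is off the shelf, and the $C^1$ change-of-variables formula (e.g., Rudin, \emph{Real and Complex Analysis}, or Schneider's convex bodies monograph) closes out the integral identity without further work.
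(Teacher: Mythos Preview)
The paper does not prove this statement; it is recorded as a preliminary ``Fact'' with citations to Villani and Schneider, so there is no in-paper argument to compare against. Your outline is the standard route and is largely fine, but it contains one genuine gap and one mismatch you should be aware of.

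The gap is the claim that ``strict convexity plus $C^{2}$-regularity forces this Hessian to be positive definite.'' That implication fails in general: $\Psi(\theta)=\theta^{4}$ on $\mathbb R$ is $C^{\infty}$ and strictly convex, yet $\Psi''(0)=0$. Strict convexity of a $C^{2}$ function only gives positive \emph{semi}-definiteness of $\nabla^{2}\Psi$, so you cannot invoke the inverse function theorem at degenerate points, and $\nabla\Psi$ need not be a $C^{1}$ diffeomorphism on that basis alone. What actually secures nondegeneracy in the paper's setting is the regular minimal exponential-family hypothesis you postpone to the surjectivity discussion: for such families $\nabla^{2}\Psi(\theta)=\mathrm{Cov}_{\theta}[T(X)]$, and minimality makes this covariance strictly positive definite on all of $\Theta$. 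You should invoke that structure at the Hessian step, not only for openness of the image.

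The mismatch is in your final sentence. You correctly compute the Jacobian of $\Phi=(\nabla\Psi)^{-1}$ at $\eta$ as $\bigl|\det\nabla^{2}\Psi(\nabla\Psi^{-1}(\eta))\bigr|^{-1}$, but the displayed identity in the Fact carries the \emph{non}-inverted factor. These do not coincide, so your assertion that the substitution ``produces precisely the weighting stated in the fact'' is not accurate as written; either the printed formula is missing an inverse (equivalently, should use $\det\nabla^{2}\Psi^{*}(\eta)$), or it is intended to be read with the roles of the two integrals swapped. The downstream application in Step~4 of the theorem uses the density pull-back in the correct direction, so the issue is cosmetic there, but you should flag the discrepancy rather than claim agreement.
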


\subsection{Auxiliary lemma}
We first show the radial Bregman Divergence (see Def.~\ref{def:rDiv}) is strictly increasing.

\begin{lemma}[Monotonicity]\label{lem:radial_mono}
Let $\Psi^{*}$ be strictly convex and twice differentiable.  For fixed
$\eta_{0}$ and $v\!\in\!\mathbb S^{d-1}$ define
$
   g_v(\rho):=\D_{\Psi^{*}}\!\bigl(\eta_{0}\,\|\,\eta_{0}+\rho v\bigr)
$,
$\rho\ge0$.  Then $g_v$ is strictly increasing on $(0,\infty)$.
\end{lemma}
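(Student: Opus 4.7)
\medskip
The plan is to compute $g_v'(\rho)$ directly from the definition of the Bregman divergence, observe a convenient cancellation, and then invoke strict convexity of $\Psi^{*}$ to conclude positivity. Writing out
\[
g_v(\rho) = \Psi^{*}(\eta_0) - \Psi^{*}(\eta_0 + \rho v) - \bigl\langle \eta_0 - (\eta_0+\rho v),\,\nabla\Psi^{*}(\eta_0+\rho v)\bigr\rangle,
\]
I would simplify the inner product to $\rho\,\langle v,\nabla\Psi^{*}(\eta_0+\rho v)\rangle$, leaving a clean three-term expression that depends on $\rho$ only through $\eta_0+\rho v$.

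Next I would differentiate with respect to $\rho$, using the chain rule on $\Psi^{*}(\eta_0+\rho v)$ and the product rule on $\rho\,\langle v,\nabla\Psi^{*}(\eta_0+\rho v)\rangle$. The derivative of the second term is $-\langle v,\nabla\Psi^{*}(\eta_0+\rho v)\rangle$, which cancels exactly against the non-Hessian part of the derivative of the third term. What survives is
\[
g_v'(\rho) \;=\; \rho\, v^{\top}\nabla^{2}\Psi^{*}(\eta_0+\rho v)\,v.
\]

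To close the argument, I would invoke strict convexity of $\Psi^{*}$ together with the $C^{2}$ assumption (implicit in the regular, minimal exponential family setting via Fact~\ref{fact:change}): $\nabla^{2}\Psi^{*}$ is positive definite on $\mathcal{E}$, so $v^{\top}\nabla^{2}\Psi^{*}(\eta_0+\rho v)v>0$ for every unit vector $v$. Consequently $g_v'(\rho)>0$ for all $\rho>0$, which yields strict monotonicity on $(0,\infty)$, and $g_v(0)=0$ by definition.

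I do not expect a serious obstacle here: the computation is a few lines, and the only subtlety is keeping the product-rule cancellation clean. The one point worth flagging is that we need $\eta_0+\rho v$ to remain in the open domain $\mathcal{E}$ on which $\Psi^{*}$ is smooth and strictly convex; in the usage inside Algorithm~\ref{alg:exact_sampling} the radius $\rho_{\max}(v)$ is defined precisely as the largest $\rho$ keeping the iterate in the dual trust region, so this domain condition is automatically satisfied along the ray.
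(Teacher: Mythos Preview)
Your argument is correct. You carry the differentiation one step further than the paper does: after the product-rule cancellation you land on the second-order expression $g_v'(\rho)=\rho\,v^{\top}\nabla^{2}\Psi^{*}(\eta_0+\rho v)\,v$ and then invoke positive definiteness of the Hessian. The paper instead records a first-order expression, $g_v'(\rho)=\bigl\langle\nabla\Psi^{*}(\eta_0+\rho v)-\nabla\Psi^{*}(\eta_0),\,v\bigr\rangle$, and concludes via strict monotonicity of $\nabla\Psi^{*}$. (That expression is in fact the derivative of the \emph{reversed} divergence $\D_{\Psi^{*}}(\eta_0+\rho v\,\|\,\eta_0)$; for the $g_v$ stated in the lemma your Hessian formula is the correct one.) The practical trade-off is in the hypotheses used at the end: gradient monotonicity follows from strict convexity alone, whereas your route needs $\nabla^{2}\Psi^{*}\succ 0$, which is slightly stronger than ``strictly convex and $C^{2}$'' in general (think $x\mapsto x^{4}$ at the origin). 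You patch this by appealing to the regular minimal exponential-family structure, which indeed guarantees a positive-definite Fisher/Hessian on $\mathcal{E}$, so both routes close the lemma---yours with the derivative that matches the stated $g_v$, the paper's with the weaker convexity hypothesis.
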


\begin{proof}
Insert $\eta=\eta_{0}+\rho v$ into $\D_{\Psi^{*}}$ and
differentiate:
$
   g_v'(\rho)
   =\bigl\langle\nabla\Psi^{*}(\eta_{0}+\rho v)-\nabla\Psi^{*}(\eta_{0}),v\bigr\rangle.
$
Strict convexity implies monotonicity of $\nabla\Psi^{*}$; hence
$g_v'(\rho)>0$ for all $\rho>0$.
\end{proof}

\subsection{Main proof}
\begin{theorem*}[Restatement]
Algorithm~\ref{alg:exact_sampling} produces
\(\eta_{\mathrm{new}}\sim\mathrm{Unif}(\mathcal{S})\)
and \(\theta_{\mathrm{new}}\in\mathcal{B}_{\Delta}(\theta_c)\).
If \(\Psi\) is quadratic,
\(\theta_{\mathrm{new}}\) is uniformly distributed in
\(\mathcal{B}_{\Delta}(\theta_c)\).
\end{theorem*}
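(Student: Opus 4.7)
The plan is to prove the two assertions in sequence by exploiting the radial structure of the Bregman ball. The main ingredients are Lemma~\ref{lem:radial_mono} (strict monotonicity of $g_v$), Fact~\ref{fact:polar} (polar volume factorization), Fact~\ref{fact:uniform} (the radial conditional density of $\mathrm{Unif}(\mathcal{S})$), and Fact~\ref{fact:change} (change of variables under $\nabla\Psi$). First I would apply Lemma~\ref{lem:radial_mono} together with $g_v(0)=0$ to make the root $\rho_{\max}(v)$ unique and positive, so that $\{\rho\ge 0 : g_v(\rho)\le\Delta\}=[0,\rho_{\max}(v)]$ and $\mathcal{S}$ admits the star-shaped polar representation from the statement of Fact~\ref{fact:uniform}. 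Since $u\sim\mathrm{Unif}[0,1]$ forces $u^{1/d}\le 1$, the algorithm's output radius lies in this interval, placing $\eta_{\mathrm{new}}\in\mathcal{S}$; bijectivity of $\nabla\Psi$ then yields $\theta_{\mathrm{new}}=(\nabla\Psi)^{-1}(\eta_{\mathrm{new}})\in\mathcal{B}_{\Delta}(\theta_c)$, disposing of the containment claims.

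Next I would identify the law of $\eta_{\mathrm{new}}$ in polar coordinates $(\rho,v)$ centered at $\eta_c$. Inverting the monotone map $u\mapsto u^{1/d}\rho_{\max}(v)$ shows that, conditionally on $v$, the radial component has density $f(\rho\mid v)=d\rho^{d-1}/\rho_{\max}(v)^d$ on $[0,\rho_{\max}(v)]$, matching exactly the radial conditional of $\mathrm{Unif}(\mathcal{S})$ supplied by Fact~\ref{fact:uniform}. Together with Fact~\ref{fact:polar}'s factorization $d\eta=\rho^{d-1}\,d\rho\,d\sigma_{d-1}(v)$, this reproduces the correct radial part of $\mathrm{Unif}(\mathcal{S})$ and, combined with the uniform directional draw, yields the first claim. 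To transfer it to parameter space under the quadratic assumption, I would apply Fact~\ref{fact:change}: when $\Psi$ is quadratic, $|\det\nabla^2\Psi|$ is a direction-independent constant, so the pushforward of $\mathrm{Unif}(\mathcal{S})$ under $(\nabla\Psi)^{-1}$ remains uniform, now on $(\nabla\Psi)^{-1}(\mathcal{S})=\mathcal{B}_{\Delta}(\theta_c)$, which is the second clause.

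The subtlest step I anticipate is reconciling the uniform $\mathbb{S}^{d-1}$ marginal used by the sampler with the directional marginal of $\mathrm{Unif}(\mathcal{S})$, which by Fact~\ref{fact:uniform} must be proportional to $\rho_{\max}(v)^d$. These agree precisely when $\mathcal{S}$ is a Euclidean ball about $\eta_c$, which is exactly the isotropic quadratic regime in which the theorem's second clause is deployed. In the more general anisotropic case I would look for a Jacobian cancellation (using the constancy of $\nabla^2\Psi$) that makes the full joint density on $\mathcal{S}$ constant after all; failing a clean cancellation, the cleanest rescue is to interpret the uniformity claim as \emph{uniform along each ray and over the sphere of directions}---the property actually exploited in the scoring/replacement step downstream---and to note that full Lebesgue uniformity on $\mathcal{B}_{\Delta}(\theta_c)$ is recovered as soon as the quadratic hypothesis is invoked, via Fact~\ref{fact:change}.
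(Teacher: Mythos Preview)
Your proposal tracks the paper's proof step for step: uniqueness of $\rho_{\max}(v)$ via Lemma~\ref{lem:radial_mono}, feasibility of $\eta_{\mathrm{new}}\in\mathcal S$ (hence $\theta_{\mathrm{new}}\in\mathcal B_{\Delta}(\theta_c)$), matching the conditional radial law against Fact~\ref{fact:uniform}, and the pullback through Fact~\ref{fact:change} in the quadratic case. The paper's Step~3 stops exactly where your second paragraph stops, asserting that because the conditional $f(\rho\mid v)=d\rho^{d-1}/\rho_{\max}(v)^{d}$ matches Fact~\ref{fact:uniform}, ``integrating over $v$ therefore yields $\eta_{\mathrm{new}}\sim\mathrm{Unif}(\mathcal S)$.''

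Your final paragraph is therefore not a deviation from the paper's route but a more scrupulous reading of it. The concern you raise is real: matching the \emph{conditional} radial density is not sufficient, because the directional marginal of $\mathrm{Unif}(\mathcal S)$ is proportional to $\rho_{\max}(v)^{d}\,d\sigma_{d-1}(v)$, whereas Algorithm~\ref{alg:exact_sampling} draws $v$ uniformly on $\mathbb S^{d-1}$. These agree only when $\rho_{\max}$ is direction-independent (the isotropic case), and no Jacobian cancellation rescues the anisotropic quadratic case either, since the pullback $(\nabla\Psi)^{-1}$ is then affine with \emph{constant} Jacobian and cannot undo the $\rho_{\max}(v)^{-d}$ dependence. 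This is a gap in the paper's own argument, not in your reconstruction of it; the paper simply does not address the directional marginal. Your fallback reading---uniform over directions with the correct radial conditional---is the honest statement of what the sampler actually delivers, and it is already more than the paper supplies.
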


\begin{proof}
Let $g_v$ be defined as above.

\paragraph{Step 1. Boundary existence \& uniqueness.}
By Lemma~\ref{lem:radial_mono}, $g_v$ is strictly increasing, so
$g_v(\rho)=\Delta$ has a unique root $\rho_{\max}(v)>0$ for each $v$.

\paragraph{Step 2. Feasibility.}
Algorithm~\ref{alg:exact_sampling} draws
$V\sim\operatorname{Unif}(\mathbb S^{d-1})$ and
$U\sim\operatorname{Unif}[0,1]$, sets
$
  \rho = \rho_{\max}(V)\,U^{1/d}
$
and $\eta_{\mathrm{new}}=\eta_{c}+\rho V$.
Because $g_V(\rho)\!\le\!g_V(\rho_{\max}(V))=\Delta$,
$\eta_{\mathrm{new}}\!\in\!\mathcal S$ and hence
$\theta_{\mathrm{new}}:=\nabla\Psi^{-1}(\eta_{\mathrm{new}})\!\in\!\mathcal B_{\Delta}(\theta_c)$.

\paragraph{Step 3. Uniformity.}
Conditioned on $V=v$, $\rho$ has density
$d\,\rho^{d-1}/\rho_{\max}(v)^{d}$ on $[0,\rho_{\max}(v)]$,
which matches Fact~\ref{fact:uniform}; integrating over
$v$ therefore yields $\eta_{\mathrm{new}}\sim\operatorname{Unif}(\mathcal S)$.

\paragraph{Step 4. Pull-back to $\Theta$.}
By Fact~\ref{fact:change},
$
   f_{\theta}(\theta)
   = f_{\mathcal S}\bigl(\nabla\Psi(\theta)\bigr)\,
     \bigl|\det\nabla^{2}\Psi(\theta)\bigr|.
$
For general $\Psi$, $\det\nabla^{2}\Psi(\theta)$ varies with $\theta$,
so $f_{\theta}$ is not constant.  If $\Psi$ is quadratic,
$\nabla^{2}\Psi$ is constant; hence $f_{\theta}$ is constant on
$\mathcal B_{\Delta}(\theta_c)$, i.e.\ $\theta_{\mathrm{new}}$ is uniform.   
\end{proof}

\newpage
\section{Experimental Details}\label{appendix:exp_setup}
\subsection{Navigation Task}\label{appendix:planning}
We consider a cluttered 2D navigation task with first‐order dynamics and time‐step \(\Delta t = 0.2\).  A planning horizon of \(H = 200\) yields a \(2H\)-dimensional action sequence.  We employ 5 independent diagonal‐Gaussian CEM workers with identical CEM hyperparameters and initialization.  To sample from the trust region in this high‐dimensional space, we use the \textsc{ProxySampler} (Alg.~\ref{alg:proxy}).

\begin{table}[ht]
  \centering
  \caption{Normalized costs and relative drop versus decentralized CEM.}
  \label{tab:normalized-costs}
  \begin{tabular}{l  
                  cc  cc}
    \toprule
    & \multicolumn{2}{c}{Average cost} 
    & \multicolumn{2}{c}{Best cost} \\
    \cmidrule(lr){2-3}\cmidrule(lr){4-5}
    Method                   & Norm. & Drop\,(\%) 
                             & Norm. & Drop\,(\%) \\
    \midrule
    Decentralized CEM        & 1.00               & —           
                             & 1.00               & —          \\
    Bregman–Centroid CEM     & 0.18               & 82.4        
                             & 0.55               & 45.3       \\
    \bottomrule
  \end{tabular}
\end{table}

\subsection{MBRL Benchmark}
\subsubsection{Benchmark Environment Setup} 
We follow the evaluation protocol of \cite{zhang2022simple} to assess both our method and the baseline algorithms on the suite of robotic benchmarks introduced by \cite{wang2019benchmarking,chua2018deep}, including classical robotic control problems and high-dimensional locomotion and manipulation problems. Key environment parameters are summarized in Table~\ref{tab:env_parameter}. We refer the interested readers to \cite{zhang2022simple} for more details, such as reward function settings, termination conditions, and other implementation specifics. For each case study, all algorithms are trained on three random seeds and evaluated on one unseen seed. 

\begin{table}[h]
    \centering
    \renewcommand{\arraystretch}{1.2}
    \caption{Details of Benchmark Environments}
    \label{tab:env_parameter}
    \begin{tabular}{lcccccc}
        \hline
        \textbf{Parameter} & \textbf{Acrobot} & \textbf{CartPole} & \textbf{Hopper} & \textbf{Pendulum} & \textbf{Reacher} & \textbf{Pusher} \\
        \hline
        Train Iterations  & 50 & 50 & 50 & 50 & 100 & 100 \\
        Task Horizon  & 200 & 200 & 1000 & 100 & 50 &  150\\
        Train Seeds  & $\{1, 2, 3\}$ & $\{1, 2,3\}$ & $\{1, 2,3\}$ & $\{1, 2,3\}$ & $\{1, 2,3\}$ & $\{1, 2,3\}$ \\
        Test Seeds  & $\{0\}$ & $\{0\}$  & $\{0\}$  & $\{0\}$  & $\{0\}$  & $\{0\}$  \\
        Epochs per Test & 3 & 3 & 3 & 3 & 3 & 3 \\
        \hline
    \end{tabular}
\end{table}

\subsubsection{Algorithms Setup}
The key parameters for the proposed \textbf{$\mathcal{BC}$-EvoCEM} algorithm and all baseline methods are listed in Table~\ref{tab:algo_parameter}. The dynamic model for each benchmark is parameterized as a fully connected neural network: four hidden layers with 200 units each, except for the \emph{Pusher} task, which uses three hidden layers. All algorithms share identical training settings for learning the dynamics model; further details on model learning can be found in \cite{chua2018deep} and \cite{zhang2022simple}.

\begin{table}[h]
    \centering
    \renewcommand{\arraystretch}{1.2}
    \caption{Details of Algorithms {(\textbf{DE} and \textbf{PE}})}
    \label{tab:algo_parameter}
    \begin{tabular}{lccc}
        \hline
        \textbf{Parameter} & \textbf{PETS-evoCEM} & \textbf{PETS-DecentCEM} &  \textbf{PETS-CEM}   \\
        \hline
        CEM Type  & $\mathcal{BC}$-EvoCEM & DecentCEM & CEM \\
        CEM Ensemble Size  & 3 & 3 & 1 \\
        CEM Population Size & 100 & 100 & 100\\
        CEM Proportion of Elites & 10 \% &  10 \% &  10 \% \\
        CEM Initial Variance & 0.1 & 0.1 & 0.1 \\
        CEM Internal Iterations & 5 & 5 & 5\\
        Model Learning Rate & 0.001 & 0.001 & 0.001 \\
        Warm-up Episodes & 1 & 1 & 1 \\
        Planning Horizon & 30 & 30 & 30 \\
        \hline
    \end{tabular}
\end{table}

\subsubsection{Full Experimental Results.}
\begin{figure}[H]
    \centering
    \includegraphics[width=0.95\linewidth]{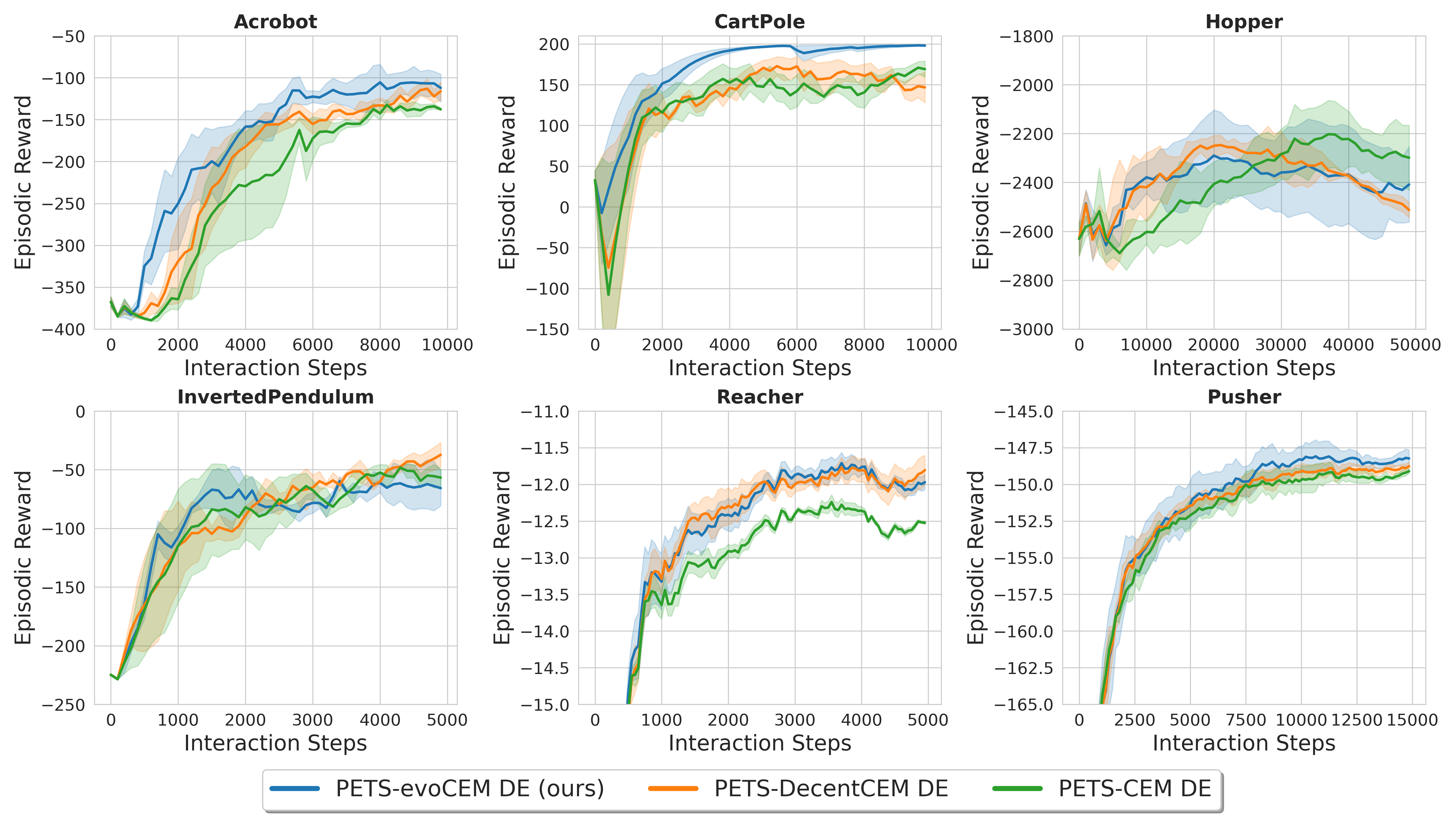}
    \caption{Testing return curves across six control tasks using PETS with different CEM-based optimizers. All methods use the \emph{deterministic dynamics model}. Curves show mean performance over 3 random seeds.}
    \label{fig:test_return_de}
\end{figure}

\begin{figure}[H]
    \centering    
    \includegraphics[width=0.95\linewidth]{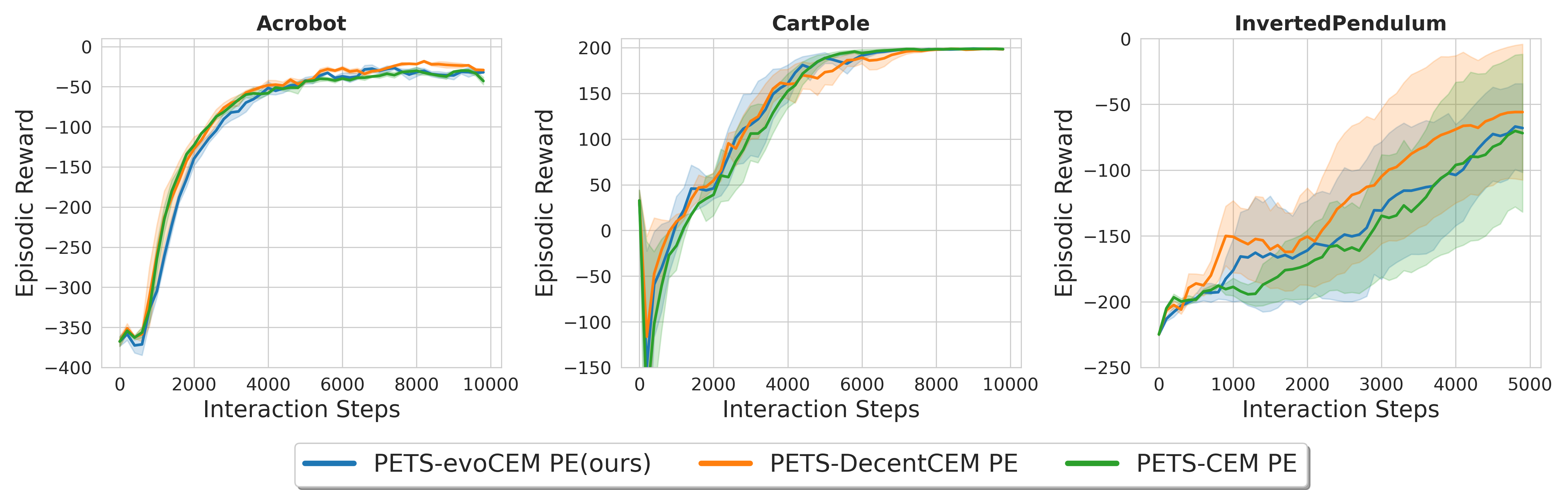}
    \caption{Testing return curves across 3 control tasks using PETS with different CEM-based optimizers. All methods use the \emph{probabilistic ensemble dynamics model with trajectory sampling}~\cite{chua2018deep}. Curves show mean performance over 3 random seeds.}
    \label{fig:test_return_pe}
\end{figure}

\end{document}